\newif{\ifhidecomments}
	\newcommand{\amit}[1]{}
	\newcommand{\chenhao}[1]{}
	\newcommand{\divyat}[1]{}
    \newcommand{\amit}[1]{\textcolor{red}{Amit: #1}}
    \newcommand{\chenhao}[1]{\textcolor{blue}{CT: #1}}
    \newcommand{\divyat}[1]{\textcolor{brown}{Divyat: #1}}
\newtheorem{theorem}{Theorem}
\newtheorem{definition}{Definition}[section]
\newcommand{\examplecf}{\texttt{Example-Based CF}\xspace}
\newcommand{\modelcf}{\texttt{Model-based CF}\xspace}
\newcommand{\approxcf}{\texttt{Model-approx CF}\xspace}
\DeclareMathOperator*{\argmin}{argmin}
\newcommand{\vecx}{\bm{x}}
\newcommand{\vecz}{\bm{z}}
\title{Preserving Causal Constraints in Counterfactual Explanations for Machine Learning Classifiers}
\author{%
  Divyat Mahajan \\
  Microsoft Research\\
  Bangalore, India\\
  \texttt{divyatmahajan@gmail.com}
  \And
  Chenhao Tan \\
  University of Colorado Boulder\\
  Boulder, USA\\
  \texttt{chenhao@chenhaot.com} \\
  \And
  Amit Sharma \\
  Microsoft Research\\
  Bangalore, India\\
  \texttt{amshar@microsoft.com}\\
}
\begin{document}

\maketitle
\begin{abstract}
To construct interpretable explanations that are consistent with the original ML model, \emph{counterfactual} examples---showing how the model's output changes with small perturbations to the input---have been proposed. This paper extends the work in counterfactual explanations by addressing the challenge of \emph{feasibility} of such examples. For explanations of ML models in critical domains such as healthcare and finance, counterfactual examples are useful for an end-user only to the extent that perturbation of feature inputs is feasible in the real world.  
We formulate the problem of feasibility as preserving causal relationships among input features and present a method that uses (partial) structural causal models to generate actionable counterfactuals. When feasibility constraints cannot be easily expressed, we consider an alternative mechanism where people can label generated CF examples on feasibility: whether it is feasible to intervene and realize the candidate CF example from the original input. To learn from this labelled feasibility data, we propose a modified variational auto encoder loss for generating CF examples that optimizes for feasibility as people interact with its output. Our experiments on Bayesian networks and the widely used ``Adult-Income'' dataset show that our proposed methods can generate counterfactual explanations that better satisfy feasibility constraints than existing methods. 
Code repository can be accessed here: \textit{https://github.com/divyat09/cf-feasibility}
\end{abstract}

\section{Introduction}
\label{Intro}
Local explanations for a machine learning model are important for people to interpret its output, especially in critical decision-making scenarios such as healthcare, governance, and finance. Techniques for explaining an ML model often involve a simpler surrogate model that yields interpretable information, such as feature importance scores~\cite{lime}. However, these techniques suffer from an inherent fidelity-interpretability tradeoff due to their use of a simpler model for generating explanations. Highly interpretable explanations may end up approximating too much and be inconsistent with the original ML model (low fidelity), while high fidelity explanations may be as complex as the original ML model and thus less interpretable.

\emph{Counterfactual} explanations~\cite{wachter2017counterfactual} have been proposed as an alternative that are always consistent with the original ML model and arguably may also be interpretable. Counterfactual (CF) explanations present the perturbations in the original input features that could have led to a change in the prediction of the model. For example, consider a person whose loan application has been rejected  by an ML classifier. For this person, a CF explanation provides what-if scenarios wherein they would have their loan approved, e.g., \textit{``your loan would have been approved if your income was \$10000 more''}. Since the goal is to generate perturbations of an input that lead to a different outcome from the ML model, CF explanation has parallels with adversarial examples~\cite{goodfellow2014explaining}.

However, one of the biggest challenges with  counterfactual explanation is to generate examples that are \emph{feasible} in the real world. Continuing with the loan example, the counterfactual changes in the input should follow some natural laws (e.g., age cannot decrease) and knowledge about interactions between features (e.g., changing education level without changing age is impossible). 
Recent work~\cite{ustun2019actionable,dhurandhar2018explanations,poyiadzi2020face} tries to address feasibility 
using statistical constraints during generation of CFs, such as encouraging CF examples that are more likely given the training data. 

Our main contribution is to show that feasibility is fundamentally a causal concept, and cannot be addressed with statistical constraints alone. We formally define feasibility for a CF example based on an underlying structural causal model between input features: a CF example is feasible if the changes satisfy constraints entailed by the causal model. To bring out the difference with statistical constraints, consider a CF example that recommends ``increase education level to Masters'' without changing the age of a person. Such a CF example is infeasible but will satisfy constraints from past work, including change in only actionable features~\cite{ustun2019actionable}, and being likely given the observed data~\cite{dhurandhar2018explanations} (it may be likely to observe others with the same age and a Masters degree). Similarly an infeasible CF example that recommends "decrease age by 3 years" satisfies the constraint from \cite{poyiadzi2020face} that intermediate CFs on the path (e.g., people with same features but age reduced by 1 and 2 years) are likely, but is impossible to act upon. We call this \emph{global} feasibility that must be satisfied by all counterfactual examples. 
We also define \emph{local} feasibility that depends on an end-user's context or preferences. 

To generate feasible CF explanations, we propose a \emph{causal proximity} regularizer that can be added to any CF generation method instead of the standard proximity measure based on $\ell_1$ or $\ell_2$ distance~\cite{wachter2017counterfactual}. The proposed proximity loss is based on causal relationships between features, as modeled by a structural causal model (SCM) of 
input features. In practice, we show how the loss can be derived from a partial SCM, or common unary and binary constraints such as monotonic change between features. 
In many cases, however, it is not possible to express feasibility 
with simple constraints. Therefore, we propose a second method that learns feasibility constraints from users' binary feedback on its generated counterfactuals. Our method modifies the standard variational autoencoder's objective to adapt it for generating CFs while encouraging feasibility as defined by users' labels. 



Results on Adult-Income
and synthetic Bayesian network datasets show that our proposed methods can generate counterfactual examples that are more feasible than models that do not include causal assumptions or user feedback. Further, our novel generative model is much faster than existing approaches to generate CFs. To summarize, our contributions include:
\begin{itemize}[itemsep=0pt,leftmargin=*,topsep=-2pt]
    \item We provide a causal view of the feasibility of CF examples that includes many kinds of constraints not considered in prior work. 
    \item To address feasibility, we propose a causal proximity regularizer based on constraints derived from an SCM, that can be applied to any method for generating CF examples.
    \item When feasibility constraints are not available, we propose a VAE-based generative model that can learn feasibility constraints from user feedback. 
\end{itemize}

\section{A Causal View of Feasibility of CF Explanations}
\label{connections}
Throughout, we assume a machine learning classifier, $h:\mathcal{X}\rightarrow \mathcal{Y}$ where $\vecx \in \mathcal{X}$ are the features and $y \in \mathcal{Y}$ is a categorical output.
A \emph{valid} counterfactual example for an input $\vecx$ and outcome $y$ is one that changes the outcome of $h$ to the desired outcome $y'$ \cite{DiverseCf}.
Counterfactual generation is usually framed as solving an optimization problem that searches in the feature space to find perturbations that are proximal (close to the original input) but lead to a different output class from the machine learning model.  \cite{wachter2017counterfactual} provide the following optimization to generate a  CF example $\vecx^{cf}$ for an input instance $\vecx$  given a ML model $h$, where the target class is  $y'$:

\begin{equation}
\label{eq:cf_loss}
    \argmin_{\vecx^{cf}} \operatorname{Loss} (h(\vecx^{cf}), y') + \operatorname{Dist}(\vecx, \vecx^{cf}).
\end{equation}

Loss refers to a classification loss (such as cross-entropy) and $\operatorname{Dist}$ refers to a distance metric (such as  $\ell_2$ distance).
That is, we seek to generate counterfactual explanations that belong to a target class $y'$ while still remaining proximal to the original input. 

Note that under this formulation, there are two limitations: 1) features of the input $\vecx$ may be changed independently to construct $\vecx^{cf}$, and 2) a new optimization problem needs to be solved for each new input. For the first, we propose a definition for incorporating feasibility below. For the second, Section~\ref{CFVAE} provides a generative model that once trained, can easily generate multiple new CFs.

\subsection{Global Feasibility}
\begin{definition}
\textbf{Causal Model~\cite{pearl2009book}.} A causal model is a triplet $M=\langle U, V, F \rangle$ such that $U$ is a set of exogenous variables,
$V$ is a set of endogenous variables that are determined by variables inside the model, 
and  $F$ is a set of functions that determine the value of each $v_i \in V$ (up to some independent noise) based on values of $U_i \bigcup Pa_i$ where $U_i \subseteq U$ and $Pa_i \subseteq V\setminus v_i$. 
\end{definition}

\begin{definition}
\textbf{Global Feasibility.} Let $\langle \vecx_i,y_i\rangle$ be the input features and the predicted outcome from $h$, and let $y'$ be the desired output class. Let $M=\langle U,V, F \rangle$ be a causal model over $\mathcal{X}$ such that each feature is in $U \bigcup V$. 
Then, a counterfactual example $\langle \vecx^{\tt cf},y^{\tt cf} \rangle$ is globally feasible if it is valid ($y^{\tt cf}=y'$),  the change from $\vecx_i$ to $\vecx^{cf}$ satisfies all constraints entailed by the causal model, and all exogenous variables $\vecx^{exog}=U$ lie within the input domain.
\end{definition}
For example, a CF example that changes an individual's age to $300$ is infeasible since it violates the limits of the input domain of the age feature.  In general, such constraints relating to the input domain may be learned from an i.i.d. sample of data by estimating the joint distribution of features. E.g., \cite{dhurandhar2018explanations} use an auto-encoder 
loss-term 
to align CF examples to the data distribution.

In addition, however, a CF example that decreases age is infeasible since it violates the natural causal model/constraint that age can only increase with time. Such {\em causal constraints cannot be learned from data alone, and often need extra information}~\cite{pearl2009book}. 
More complex causal constraints can be defined over pairs or multiple variables. 
For example, in the loan decision example from above, we can consider $v_{p1}$ as {\em education-level} and $v$ as {\em age}, and posit a causal relationship 
that increasing {\em education-level} needs years to complete and thus causes {\em age} to increase.
That is, any counterfactual example that increases {\em education-level} without increasing {\em age} is infeasible,
although a counterfactual example that increases {\em age} without changing {\em education-level} may still be feasible as we do not know the full set of causes that may increase {\em age}. While some of these feasibility constraints can be formulated in simple terms, causal relationships over multiple features can lead to complex constraints.  As we show below, they can be defined formally using structural causal models, and implemented as constraints on how features can change (Section~\ref{sec:model-approx}).  
Constraints from an SCM can also
be probabilistic (e.g., increasing education level with a six-month increase in age is unlikely). 
We may define \emph{degree of feasibility} as the joint probability of 
changes in selected features.   

\subsection{Local Feasibility}
For a particular user, a globally feasible CF example may still be infeasible due to an end-user's context or personal preferences. Thus, while global feasibility is a necessary condition for feasible CFs, we also need to define local feasibility.
\begin{definition}
\textbf{Local Feasibility}: A CF example is locally feasible for a user  if it is globally feasible and satisfies user-level constraints. 
\end{definition}

For example, a user may find it difficult to change their city because of family constraints. Thus, a counterfactual example may be locally infeasible due to many user specific factors. Preserving global constraints entailed from a causal model provides necessary conditions for a feasible counterfactual, but customization may be needed for local feasibility. 

\subsection{A causal proximity loss for generating CF examples}
\label{Causal-CFVAE}
We now define a feasibility-compatible notion of distance to constrain independent perturbations of features.
We want the counterfactual to be proximal to the data sample not only based on the Euclidean distance between them, but also based on the causal relationships between features. 

Suppose we are provided with the structural causal model~\cite{pearl2009book} for the observed data, including the causal graph $G$ over $U \bigcup V$ 
and the functional relationships between variables. $V$ is the set of all endogenous nodes that have at least one parent in the graph. For exogenous variables $U$, we use the standard proximity loss (e.g., using the $\ell_1$/$\ell_2$ distance). 
For each endogenous node $v \in V$ that has at least one parent in the graph, we propose a new feasibility-compatible distance metric based on the generating mechanism of $v$ conditioned on its parents, i.e., $v = f(v_{p1},..,v_{pk}) + \epsilon$ where $v_p$ refers to parent nodes of $v$ and $\epsilon$ denotes independent random noise. 
For each node $v \in V$, 

\begin{equation*}
    \operatorname{DistCausal_v}(x_{v}, x_{v}^{cf}) = 
    \operatorname{Dist_v}(x_{v}^{cf}, f(x_{v_{p1}}^{cf},..,x_{v_{pk}}^{cf}))
\end{equation*}

\begin{wrapfigure}{r}{0.4\linewidth}
    \includegraphics[width=\linewidth]{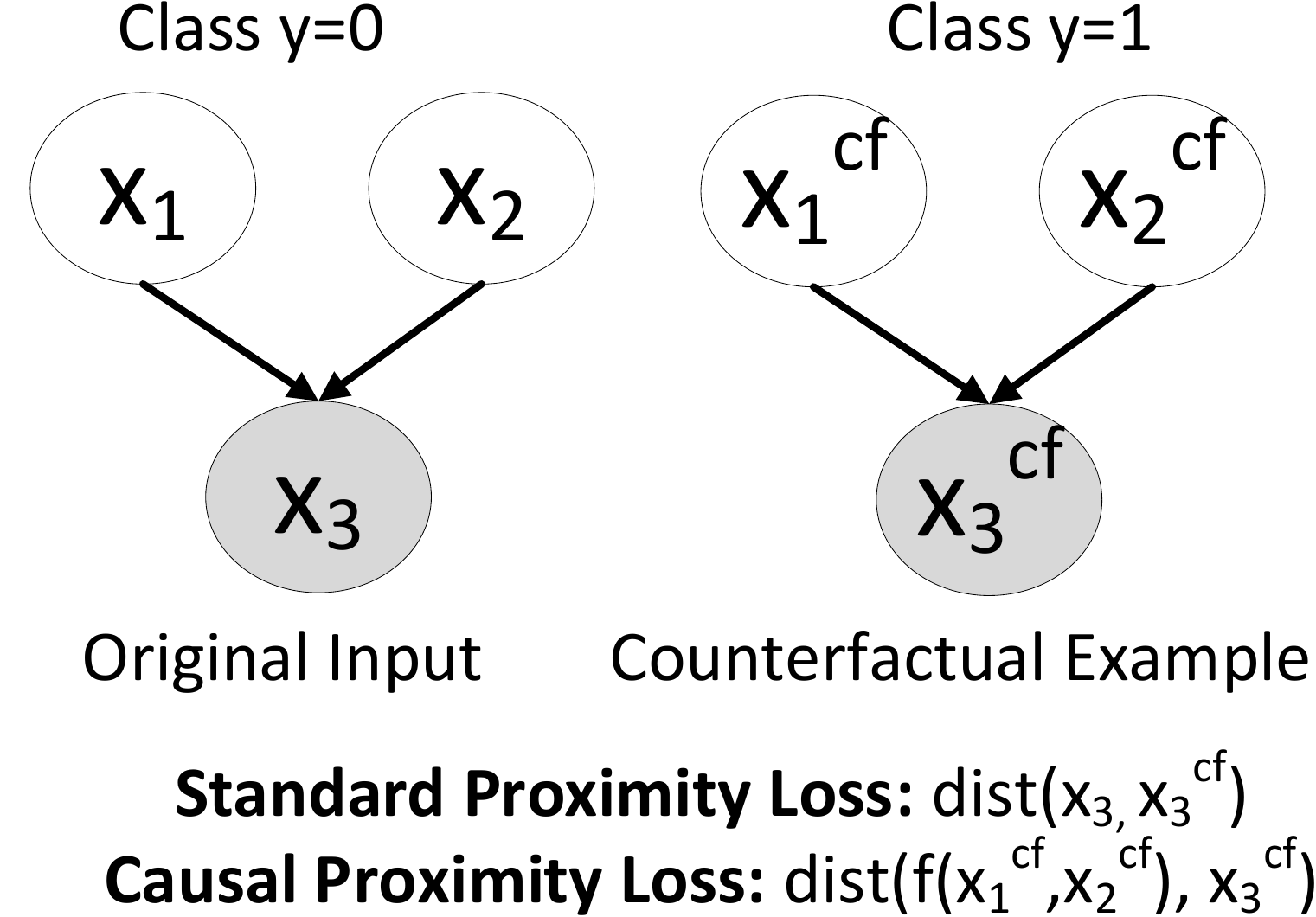}
    \caption{Defining the proximity loss with SCM.}
    \label{fig:causal-proximity}
\end{wrapfigure}
where $f(x_{v_{p1}}^{cf},..,x_{v_{pk}}^{cf})=\mathbb{E}[x_{v}^{cf} | x_{v_{p1}}^{cf},..,x_{v_{pk}}^{cf}]$.
This distance metric (Fig.~\ref{fig:causal-proximity}) indicates that the counterfactual value for the feature $v$ should depend on the values of its parents in the counterfactual example. Once its parents' counterfactual values have been decided, its value should ideally be the one predicted by the SCM function $f$. 
Note that the distance metric does not 
compare to the original value $x_v$, as in the standard proximity loss from Equation~\ref{eq:cf_loss}. Having the standard proximity loss on the exogenous features and the causal proximity loss on all other features ensures that an ideal CF is close to the original input instance and also preserves the causal relationship between features.

Hence, the $\operatorname{Dist}$ term in the CF generation loss function (Eq.~\ref{eq:cf_loss})  can be modified to generate counterfactuals that preserve causal constraints, where $U$ are the exogenous nodes (i.e., nodes without any parents in the causal graph) and $V$ are the remaining features.
\begin{equation*} 
\label{CausalProximity}
\begin{split}
\operatorname{DistCausal}(\vecx, \vecx^{cf}) = \sum_{u \in U}^{}  \operatorname{Dist_u} (x_{u}^{cf}, x_{u}) + \sum_{v \in V}^{} \operatorname{DistCausal_v}(x_{v}, x_{v}^{cf}) 
\end{split}
\end{equation*}
Since knowing the full causal graph is often impractical, the above approach can also work whenever we have partial knowledge of the causal structure (e.g., some edges in the causal graph).
From this partial causal knowledge, we construct a set of nodes $V$ for which we know the generating mechanism for each node $v \in V$ conditioned on its parents and consider the rest of the variables in $U$.
We refer to this approach as {\bf \modelcf}.

\subsection{Building a feasibility-compatible CF explanation method}
\label{sec:model-approx}
Our proposed proximity loss can be combined with any prior CF generation method, by replacing the proximity term:
$\argmin_{\vecx^{cf}} \operatorname{Loss} (h(\vecx^{cf}), y') + \operatorname{DistCausal}(\vecx, \vecx^{cf})$.
However, in practice, the exact functional causal mechanism for a variable is often unknown. Therefore,  we present a simple approximation of the above loss by directly optimizing for certain constraints based on domain knowledge. For example, one  may know that Age of a person cannot decrease, or that Education-level shares a monotonic causal relationship with Age, without knowing the true functional form. 
Below we provide example loss terms for common unary and binary feasibility constraints. 

\textbf{Unary constraints.}
We consider unary constraints that stipulate whether a feature can increase or decrease
and define a hinge loss on the feature of interest. As an example, for the case that a feature can only increase, the hinge loss would be as follows:  $   -\min(0, \vecx^{cf}_v - \vecx_v)$.

\textbf{Binary constraints.}
Binary constraints capture the nature of causal relationship between two features. One of the most common are monotonic constraints, which we approximate by learning an appropriate linear model for each binary constraint. Let $x_1$ and $x_2$ be two features where $x_1$ causes $x_2$ and we have a monotonically increasing trend between them. We capture this monotonic trend by learning a linear model between $x_1$ and $x_2$, under the constraint that the parameter that relates $x_1$ to $x_2$ should be positive (or negative depending on the nature of monotonicity) . This can be learnt by minimizing the following loss function over training data:
 $+ (\vecx_{v_2} - \alpha - \beta \vecx_{v_1}) - \min(0, \beta)$, 
    where $\alpha$ and $\beta$ are parameters that can be learned from training data. We refer to this approach as {\bf \approxcf}. 

\label{sec:gen-cf}

\section{Example-Based Generation of Feasible CF explanations}
In most situations, however, it is difficult to express complex constraints involving multiple features as convex loss terms. A more practical setting is that a user may provide feedback on generated counterfactuals to say which ones are feasible, and the CF generation method should learn feasibility constraints through this feedback. For this to work interactively with a user, we need a pre-trained CF generation method that can generate initial CFs for different inputs and then update its generation method in a fast manner. Past approaches~\cite{dhurandhar2018explanations, DiverseCf}  to CF generation run a new optimization for each input and 
can be difficult to fine-tune 
online.  We thus propose a model-based method,
{\bf \examplecf}. It includes a variational autoencoder (VAE) module that parameterizes generation of CFs, and a fine-tuning model that updates model parameters to support feasibility. 

\subsection{Base VAE generator for CF explanations}
\label{CFVAE}
We define the CF objective as generating CF examples $\vecx^{cf}$ 
as building a model that maximizes $\Pr(\vecx^{cf}| y', \vecx)$ such that $\vecx^{cf}$ belongs to class $y'$.  Our approach is based on an encoder-decoder framework where the task of the encoder is to project input features to a suitable latent space and the task of the decoder is to generate a counterfactual from the latent representation given by the encoder. Analogous to a variational auto-encoder (VAE)~\cite{vae}, we first arrive at a latent representation $\vecz$ for the input instance $\vecx$ via the encoder $q(\vecz|\vecx, y')$ and then generate the corresponding counterfactual $\vecx^{cf}$ via the decoder $p(\vecx^{cf}|\vecz, y')$. Following the construction in VAEs~\cite{vae}, we first derive the evidence lower bound (ELBO) for generating CF explanations. \\

\begin{theorem}
The evidence lower bound to optimize the CF objective $\Pr(\vecx^{cf}| y', \vecx)$ is:
\begin{equation*}
\begin{split}
{ \ln \Pr(\vecx^{cf}| y', \vecx) \geq  \mathbb{E}_{Q(\vecz|\vecx, y')} \ln P( \vecx^{cf} | \vecz, y', \vecx) 
- \mathbb{KL}(Q(\vecz|\vecx, y') ||P(\vecz| y', \vecx) }.
\end{split}
\end{equation*}
\end{theorem}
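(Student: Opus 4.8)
The plan is to mirror the standard evidence-lower-bound derivation for variational autoencoders, being careful to carry the conditioning on both the target class $y'$ and the original input $\vecx$ through every step. First I would observe that the quantity $\ln \Pr(\vecx^{cf}\mid y',\vecx)$ does not depend on the latent variable $\vecz$, so it equals its own expectation under the approximate posterior $Q(\vecz\mid\vecx,y')$; this is the move that lets me introduce the variational distribution into an otherwise $\vecz$-free expression.

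Next I would apply Bayes' rule conditioned on $(y',\vecx)$ to write $\Pr(\vecx^{cf}\mid y',\vecx)$ as the ratio of the joint $P(\vecx^{cf},\vecz\mid y',\vecx)$ to the true posterior $P(\vecz\mid\vecx^{cf},y',\vecx)$. Taking logarithms and inserting $Q(\vecz\mid\vecx,y')$ in both numerator and denominator, the expression splits into two expectations: one of $\ln\frac{P(\vecx^{cf},\vecz\mid y',\vecx)}{Q(\vecz\mid\vecx,y')}$ and one of $\ln\frac{Q(\vecz\mid\vecx,y')}{P(\vecz\mid\vecx^{cf},y',\vecx)}$. The second expectation is exactly $\mathbb{KL}\big(Q(\vecz\mid\vecx,y')\,\|\,P(\vecz\mid\vecx^{cf},y',\vecx)\big)$, which is non-negative by Gibbs' inequality; discarding it yields the inequality.

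To recover the precise form in the statement, I would factor the joint as $P(\vecx^{cf},\vecz\mid y',\vecx)=P(\vecx^{cf}\mid\vecz,y',\vecx)\,P(\vecz\mid y',\vecx)$ and expand the remaining expectation. This separates the bound into the reconstruction term $\mathbb{E}_{Q(\vecz\mid\vecx,y')}\ln P(\vecx^{cf}\mid\vecz,y',\vecx)$ and a second term $\mathbb{E}_{Q}\ln\frac{P(\vecz\mid y',\vecx)}{Q(\vecz\mid\vecx,y')}$, which is the negative divergence $-\mathbb{KL}\big(Q(\vecz\mid\vecx,y')\,\|\,P(\vecz\mid y',\vecx)\big)$, matching the claimed expression.

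The derivation is routine; the only place demanding care is bookkeeping the conditioning set. In particular I must distinguish the prior-like term $P(\vecz\mid y',\vecx)$, which appears in the final KL, from the true posterior $P(\vecz\mid\vecx^{cf},y',\vecx)$, whose non-negative KL against $Q$ is the slack I throw away to obtain the inequality. Conflating these two conditionals is the one genuine pitfall, since both are posteriors over $\vecz$ yet differ in whether $\vecx^{cf}$ sits in the conditioning set.
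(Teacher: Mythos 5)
Your derivation is correct, but it follows a different (equally standard) route from the paper's. The paper writes $\ln \Pr(\vecx^{cf}\mid y',\vecx)$ as the log of the marginal $\int P(\vecx^{cf},\vecz\mid y',\vecx)\,d\vecz$, inserts $Q(\vecz\mid\vecx,y')$ as an importance weight, and pushes the logarithm inside via Jensen's inequality; the slack in the bound is never identified. You instead use the exact identity $\ln \Pr(\vecx^{cf}\mid y',\vecx) = \mathbb{E}_{Q}\bigl[\ln \tfrac{P(\vecx^{cf},\vecz\mid y',\vecx)}{Q(\vecz\mid\vecx,y')}\bigr] + \mathbb{KL}\bigl(Q(\vecz\mid\vecx,y')\,\Vert\,P(\vecz\mid\vecx^{cf},y',\vecx)\bigr)$ and obtain the inequality by discarding the non-negative second term. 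Your route is slightly longer but more informative: it exhibits the gap in the bound as the KL divergence between the variational posterior and the true posterior over $\vecz$, so you get the tightness condition for free (the bound is an equality exactly when $Q(\vecz\mid\vecx,y')$ matches $P(\vecz\mid\vecx^{cf},y',\vecx)$). The paper's Jensen argument is shorter and needs only concavity of the logarithm. Both approaches require the same final factorization $P(\vecx^{cf},\vecz\mid y',\vecx)=P(\vecx^{cf}\mid\vecz,y',\vecx)\,P(\vecz\mid y',\vecx)$ to split the ELBO into the reconstruction term and the KL against the prior, and your warning about not conflating $P(\vecz\mid y',\vecx)$ with $P(\vecz\mid\vecx^{cf},y',\vecx)$ is exactly the right thing to flag. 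One minor quibble: the identity you invoke is the definition of conditional probability (the chain rule for the joint) rather than Bayes' rule per se, but this does not affect the argument.
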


The proof is in the \textbf{Suppl. A.} 
The prior of the latent variable $\vecz$ is modulated by $y'$ and $\vecx$, but following \cite{sohn2015cvae}, we simply use 
$p(\vecz | y', \vecx) 
\sim \mathcal{N}(\mu_{y'}, \sigma^2_{y'})$,
so the KL Divergence can be computed in closed form.  $P(\vecx^{cf} | \vecz, y', \vecx)$ represents the probability of the output $\vecx^{cf}$ given the desired class and 
latent variable $\vecz$. 
This can be empirically estimated by the $\ell_1$/$\ell_2$ loss or any general $\operatorname{Distance}$ metric between input $\vecx$ and $\vecx^{cf}$. That is, without additional assumptions, we are assuming that probability $P(\vecx^{cf})$ is highest near $\vecx$. In addition, this probability expression is conditioned on $y'$, implying that $\vecx^{cf}$ is valid only if belongs to $y'$ class when applied with $h$. We thus use a classification loss (e.g., hinge-loss) between $h(\vecx^{cf})$ and $y'$, where $y'$ represents the target class and $\beta$ represents the margin where $\lambda$ is a hyperparameter. 

\begin{equation*}
\small
\mathbb{E}_{Q(\vecz|\vecx, y')} \ln P( \vecx^{cf} | \vecz, y', \vecx) 
 \approx  \mathbb{E}_{Q(\vecz|\vecx, y')} [\operatorname{Dist}(\vecx, \vecx^{cf}) + \lambda \operatorname{HingeLoss}(h(\vecx^{cf}), y', \beta )],
\end{equation*}

where the hinge loss function is defined over the softmax scores $s_y$ for each class output of $h$:
{
\small
$ HingeLoss( h(\vecx^{cf}), y', \beta )= max\{ [ \max_{y!=y'}\{s_y(\vecx^{cf})\} - s_{y'}(\vecx^{cf})], -\beta \} $
    }.
The above Hinge Loss formulation encourages classifier's score on target class to be higher than any other class by at least a margin of $\beta$. 
 To summarize, given the ML model $h$ to be explained, we learn our proposed model by minimizing the following loss function (\texttt{BaseGenCFLoss}):
 
\begin{equation*} \label{GenerativeLoss}
    \small \mathop{\mathbb{E}}_{Q(\vecz|\vecx, y')}[%
    \operatorname{Dist}(\vecx, \vecx^{cf}) 
    + \lambda \operatorname{HingeLoss}(h(\vecx^{cf}), y', \beta) ]
     + KL(Q(\vecz|\vecx,y')||P(\vecz|y', \vecx) ),
\end{equation*}    

where $y'$ is the target counterfactual class. Our loss formulation bears an intuitive resemblance with the standard counterfactual loss formulation (Eq. \ref{eq:cf_loss}).  $\operatorname{HingeLoss}(h(\vecx^{cf}), y', \beta)$ helps us to generate valid counterfactuals with respect to the ML model $h$, and $\operatorname{Distance}(\vecx, \vecx^{cf}$) helps us to generate counterfactuals that are close to the input feature. The additional third term in the loss function represents the KL divergence between the prior distribution $p(\vecz|y')$ and the latent space encoder $q(\vecz|\vecx, y')$, analogous to the loss term in a VAE~\cite{vae}. Our encoder-decoder framework can be viewed as an adaptation of VAE for the task of generating counterfactuals. 
 Typically, the $\operatorname{Dist}$ function can be defined as the $\ell_1$ distance between the input $\vecx$ and the counterfactual $\vecx^{cf}$: $\operatorname{Dist}(\vecx, \vecx^{cf}) = \|\vecx-\vecx^{cf}\|_1$. 




\subsection{Learning feasibility constraints through user feedback}
\label{Oracle-CFVAE}

We consider the user as an Oracle that provides binary yes/no feedback on feasibility of a generated CF. Given any input pair   $(\vecx, \vecx^{cf})$ the oracle outputs $1$ if the CF example is feasible, otherwise it outputs $0$.
Hence, in order to generate feasible counterfactuals, our task is to maximize the Oracle score with as few queries to the Oracle as possible. Consider a dataset $(\vecx_i, \vecx'_i, o_i)^q_{i=1}$ where $\vecx'_i$ is the CF example for $\vecx_i$ and $o_i$ is the output of the oracle.

For any new $\vecx$, ideally the output $\vecx^{cf}$ of our model should have a high score. Thus the $\vecx^{cf}$ should be similar to the feasible CFs and dissimilar to infeasible CFs in the query set. We write the similarity of $\vecx^{cf}$ generated by our model and a query $(\vecx_i, \vecx'_i)$ as: 
$ \operatorname{sim}(\vecx_i^{cf}, \vecx'_i)= \exp(-(\vecx'_i - \vecx_i^{cf})^T (\vecx'_i -\vecx_i^{cf}))$
where $\vecx_i^{cf}$ is the output of the VAE with input $\vecx_i$. The similarity should be higher when $o_i=1$ and lower when $o_i=0$, leading to the loss: $\sum_{i=1}^q ||o_i - \operatorname{sim}(\vecx_i^{cf}, \vecx'_i)||_2^2$.

Thus the proposed algorithm has two phases:


\begin{algorithm}[t]
\small
    \SetAlgoLined
    \KwInput{Training data $(\vecx,y)^n_{j=1}$}
    \KwOutput{Counterfactuals $\vecx^{cf}$}
     Base training phase: Learn a base VAE and generate query CFs $(\vecx_i, \vecx'_i)$ to be labelled for feasibility. \\
  Feasibility learning phase: Given labelled queries $(\vecx_i, \vecx'_i, o_i)_{i=1}^q$, fine-tune the trained VAE with the following loss where $\lambda$ is a hyperparameter trading off between validity/proximity and feasibility: 
\begin{align*}
\begin{split}
   \min \sum_{i=1}^{q}     [\operatorname{BaseGenCFLoss}(\vecx_i, \vecx_i^{cf}) 
    +
   \lambda_o
 ||o_i - \operatorname{sim}(\vecx_i^{cf}, \vecx'_i)||_2^2.
\end{split}
\end{align*}
    \caption{\examplecf}
\end{algorithm}

Here is an example of how the method can be used to enforce positive  individual treatment effects (ITE) among features. Consider an example of feature $v$ and its causes $(v_{1}, \ldots, v_{p})$, with a positive ITE of each cause on the feature $v$. This can be captured implicitly by a simple Oracle O. 

\begin{equation*}
\label{eq:simple-oracle}
     O(\vecx, \vecx^{cf})=
    \begin{cases}
      1, &  \text{if}\ (\ \forall \text{i}\ \{\vecx_{v_{pi}}^{cf} > \vecx_{v_{pi}}\} \implies {\vecx_v^{cf}>\vecx_v} )\ 
      \text{or}\ \\ & (\ \forall \text{i}\ \{\vecx_{v_{pi}}^{cf} < \vecx_{v_{pi}}\} \implies \vecx_v^{cf}< \vecx_v ) \\
      0, & \text{otherwise}
    \end{cases}
\end{equation*}
 
Additionally, the method can be used to capture personalized user-specific constraints with a user as the oracle,
thus representing human perception.  Different users could be modeled using different oracles. The oracle can give labelled data for (multiple) user-specific constraints by simply stating a counterfactual as feasible ($O(\vecx, \vecx')=1$) or infeasible ($O(\vecx, \vecx')=0$ ). 

\section{Empirical Evaluation}\label{sec:eval}
We evaluate our proposed methods,  \texttt{\modelcf}, \texttt{\approxcf}, 
and \texttt{\examplecf} on the Adult dataset and simulated Bayesian network datasets. For a fair comparison to \texttt{\examplecf}, we use its base bariational autoencoder as the base CF generator for both  \texttt{\modelcf} and \texttt{\approxcf}.
As we have mentioned before, a counterfactual could be infeasible due to many reasons, but for our evaluation, we assume that there is a constraint that completely captures the feasibility of a counterfactual. To design this constraint, we either infer it from the causal model (simulated datasets) or from domain knowledge (real-world dataset).

\subsection{Datasets.}
\textbf{Simple-BN.} We consider a toy dataset of 10,000 samples with three features $(\vecx_1, \vecx_2, \vecx_3)$ and one outcome variable ($y$). The causal relationships between them are modeled as:
{
\begin{align*}
    p(x_1)& \sim N(\mu_{1},\sigma_{1}); \quad \quad p(\vecx_2) \sim N(\mu_{2},\sigma_{2}) \\
     p(x_3|x_1, x_2 )&  \sim N(k_{1}*(x_1+x_2)^{2} + b_{1}, \sigma_{3}); k_{1} >0, b_{1}>0; \\
     p(y|x_1, x_2, x_3 )& \sim Bernoulli(\sigma( k_{2}*(x_1*x_2) + b_{2} - x_3) );  k_{2} >0, b_{2}>0
 \end{align*}
 }%
Note that we require $x_1$ and $x_2$ to be positive, so they follow a truncated normal distribution.
The intuition is that $\vecx_3$ is determined by a monotonically increasing function of $\vecx_1$ and $\vecx_2$. At the same time,  $y$ is positively affected by an increase in $\vecx_1$ and $\vecx_2$ but negatively affected by $\vecx_3$. Thus, a naive counterfactual method may not satisfy the monotonic constraint on $(\vecx_1, \vecx_2)$ and $\vecx_3$. Specifically, the global monotonicity constraint is defined as: ``{\small ($\vecx_1,\vecx_2$ increase $\implies$ $\vecx_3$ increases) AND ($\vecx_1,\vecx_2$ decrease $\implies$ $\vecx_3$ decrease)}''.

We report results for a hand-crafted set of parameters such that it is possible to have a  tradeoff between proximity and monotonic feasibility constraint: {\small $\mu_1=50$, $\mu_2=50$, $\sigma_1=15$ $\sigma_2=17$, $\sigma_3=0.5$, $k_1=0.0003$, $k_2=0.0013$, $b_1=10$, $b_2=10$}.\\

\textbf{Sangiovese~\cite{magrini2017sangiovese}.} This is a conditional linear Bayesian network on the effects of different agronomic settings on quality of Sangiovese grapes~\cite{bnlearn-repo-url}. It has 14 features and a categorical output for quality, with a sample size of 10,000. The true causal model is known. The features are all continuous except Treatment which has 16 levels. For simplicity, we remove the categorical variable Treatment since it leads to 16 different linear functions.  For feasibility, we test a monotonic constraint over two variables, $BunchN$ and $SproutN$. Specifically, the global monotonicity constraint is defined as: 

($SproutN$ increase $\implies$ $BunchN$ increases) AND ($SproutN$ decrease $\implies$ $BunchN$ decrease) \\

\textbf{Adult~\cite{adult-dataset}.}
We consider a real-world dataset, \texttt{Adult}.
The outcome $y$ is binary $y=0$ (Low Income), and $y=1$ (High Income). Since we do not have a causal model, we design two constraints that capture feasibility using  domain knowledge:  

$\text{C1: } \vecx_{Age}^{cf} \geq \vecx_{Age}$ 

$\text{C2: } (\vecx_{Ed}^{cf} > \vecx_{Ed} \implies \vecx_{Age}^{cf} > \vecx_{Age} )   
\text{ AND } (\vecx_{Ed}^{cf} = \vecx_{Ed} \implies \vecx_{Age}^{cf} \geq \vecx_{Age} )$

$\tt C1$ represents a unary constraint that Age cannot decrease in CF explanations. $\tt C2$ represents a monotonic constraint that increase in Educational level should increase Age, and if Educational level remains the same, age should not decrease. $\tt C2$ also includes an additional constraint that Education level cannot decrease. Hence, if Education level decreases then its an infeasible counterfactual, regardless of the change in Age (since in practice Education level does not usually decrease). To make the CF generation task more challenging, we sample data points with the Age feature greater than 35 and outcome class $y=0$ and data points with the value of feature Age less than 45 and the outcome class $y=1$. This creates a setup in which higher age data points are more correlated with the low income class group and vice-versa. We obtain a dataset of size $15691$ and consider the task of generating CFs with the target class as $y=1$.

\subsection{Evaluation Setup}

\begin{wrapfigure}{r}{0.5\textwidth}
\centering
\begin{tabular}{ccc}
\includegraphics[width=0.24\textwidth]{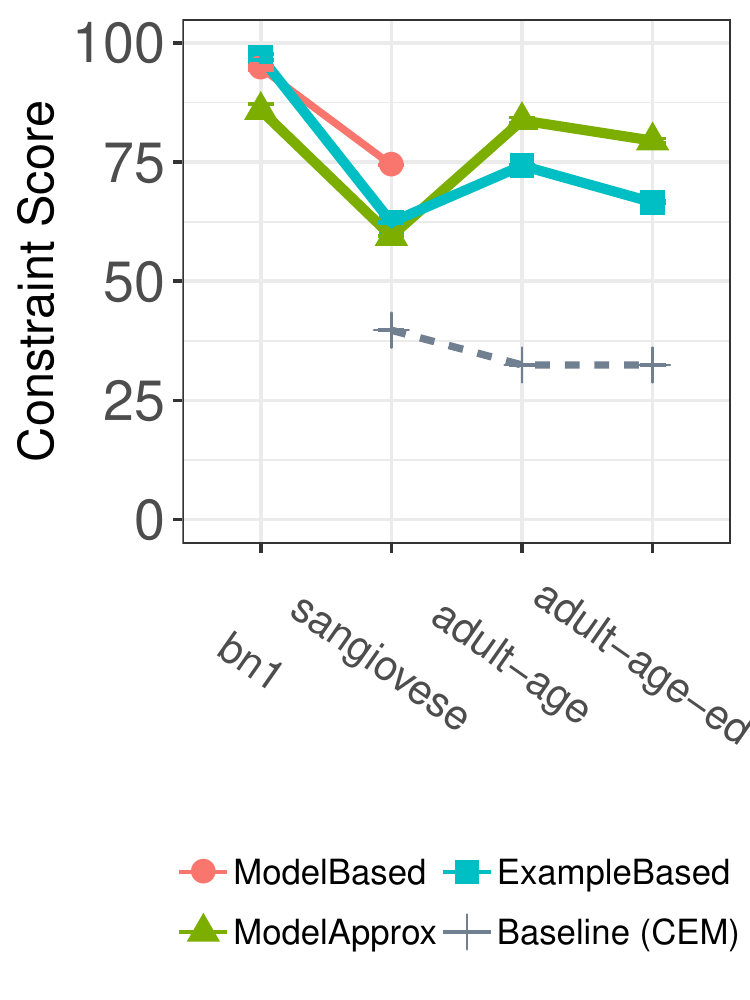} &
\includegraphics[width=0.24\textwidth]{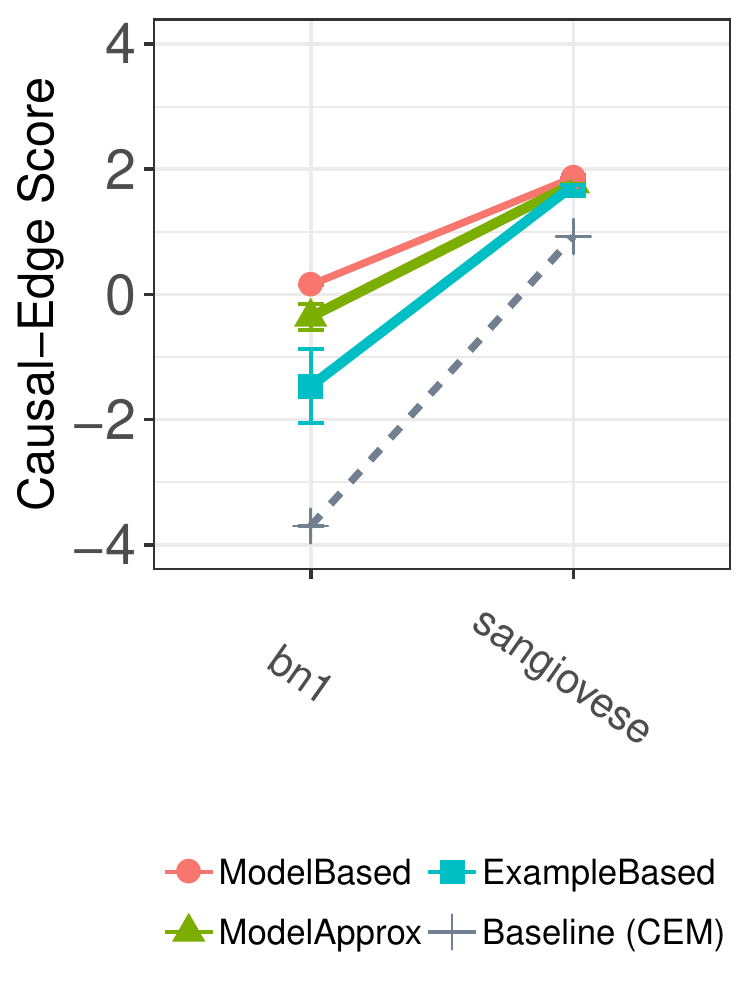} 
\\
(a) & (b) 
\end{tabular}
\caption{Constraint-Feasibility for three datasets, Causal-Edge score for BN1 and Sangiovese.}
\label{fig:main-metrics}
\end{wrapfigure}
For all  experiments, the ML classifier $h$ is implemented as a neural network with two hidden layers, with non-linear activation (ReLU) on the first hidden layer. Continuous features are scaled to (0-1) range and categorical features are represented as one-hot encoded vectors. Each proposed method for counterfactual generation is trained using a  80-10-10\% training, validation and test dataset respectively. For the \examplecf method, we additionally  generate the query set $Q$ using 10\% of the training dataset with 10 counterfactuals per data point. Details regarding the ML model, the base VAE architecture, and hyperparameter tuning for all methods are provided in Suppl. \textbf{C.1}. \\

\noindent \textbf{Methods.} We compare results for \modelcf, \approxcf, and \examplecf with
   CEM, the state-of-the-art contrastive explanations method proposed by~\cite{dhurandhar2018explanations} that uses an auto-encoder to model probability distribution of train data. 
Suppl. \textbf{C.1} describes the loss terms used for modeling constraints in \approxcf for different datasets. \\

\textbf{Evaluation Metrics.} We define the following metrics to evaluate CF examples. 
\begin{itemize}[itemsep=0pt,leftmargin=*,topsep=-2pt]
    \item      \texttt{Target-Class Validity}: \% of CFs whose predicted class 
    is 
    the target class;  
    \item  \texttt{Cont-Proximity}: Proximity for continuous features as the average $\ell_1$-distance between $\vecx^{cf}$ and $\vecx$ in units of median absolute deviation for each features~\cite{DiverseCf}; 
    \item     \texttt{Cat-Proximity}: Proximity for categorical features as the total number of mismatches on categorical value between $\vecx^{cf}$ and $\vecx$ for each feature~\cite{DiverseCf}; 
    \item \texttt{Constraint Feasibility Score}: For Simple-BN and Sangiovese datasets, the harmonic mean of \% of CFs satisfying the two sub constraints (S1 and S2) of the given monotonic constraint, 
    $ \frac{2*S1*S2}{S1+S2} $, and for Adult we report \% of CFs satisfying C1 and C2 separately;
\item \texttt{Causal-Edge Score}: Log Likelihood of CFs w.r.t. a given causal edge distribution. 
    Causal-Edge-Score is defined only for $\tt Simple\-BN$ and $\tt Sangiovese$ where the true causal model is known. 
    \end{itemize}
We also evaluate on the Interpretability Score proposed in ~\cite{van2019interpretable}; details are  in Suppl. \textbf{C.1}.

\subsection{Results}

\textbf{Evaluating feasibility.} 
Figure~\ref{fig:main-metrics} shows \texttt{Constraint-Feasibility Score} for all datasets and  \texttt{Causal Edge Score} for Simple-BN and Sangiovese datasets,  averaged over 10 runs (other metrics are in the Suppl. \textbf{C.2}).  For Simple-BN dataset, \examplecf achieves the highest \texttt{Constraint  Feasibility score}, \modelcf achieve the highest Constraint Feasibility score on the Sangiovese dataset, while the \approxcf achieves the highest Constraint Feasibility score on the Adult dataset. 

All the methods achieve perfect score on the \texttt{Target-Class Validity} (refer to Suppl. \textbf{C.2}). However, across the three datasets, the methods designed to preserve feasibility (\modelcf, \approxcf, and \examplecf) perform better than \texttt{CEM} on the \texttt{Constraint -Feasibility Score}. That is, \texttt{CEM}
achieves a score of zero on simple-bn dataset and around 40\% on the Sangiovese and Adult dataset. 

The poor performance of \texttt{CEM} on Constraint  Feasibility across datasets suggests that feasibility cannot be solely captured by relying on the observed data likelihood. 
In the Adult dataset, the feasibility constraint requires Age to be increased, despite a correlation between low Age and High Income in the dataset, illustrating the fact that following observed distribution does not ensure feasibility.

We also compare on the Causal-Edge score metric that evaluates the log-likelihood of the generated CF wrt the true function in the causal graph. For Simple-BN dataset, we find that Model-Based method achieves the highest score, followed by Model-Approx and Example-Based. On Sangiovese, all methods are comparable. While the Model-Based and Model-Approx methods have explicit knowledge of the constraints, this result shows that the Example-Based method can also learn the constraint based on examples. CEM method has the lowest score. 
\\
\begin{wrapfigure}{r}{0.35\linewidth}
\centering
 \includegraphics[width=\linewidth]{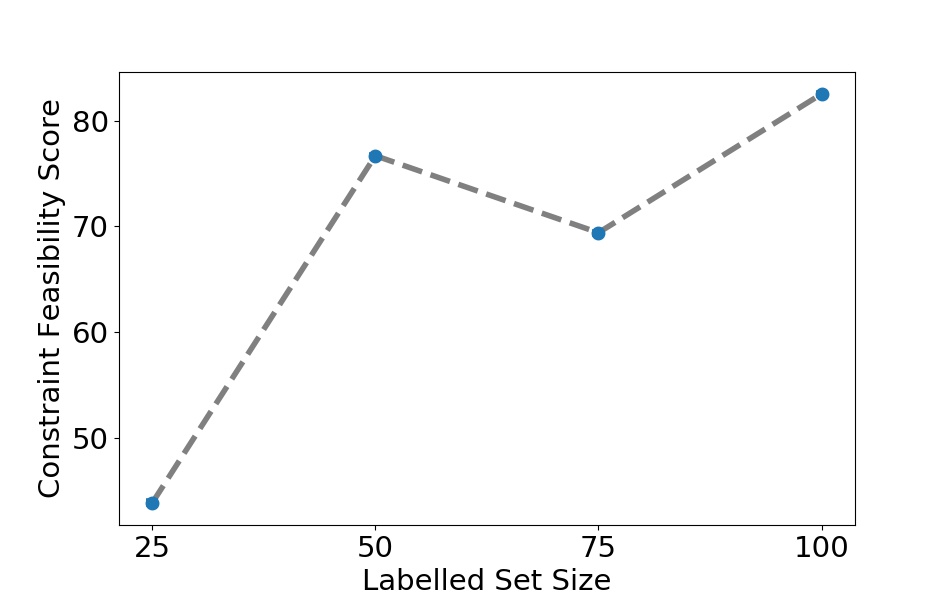}
\caption{Constraint-Feasibility score as the no. of labelled examples is increased for global constraints in \texttt{Adult}. 
}
\label{fig:other-metrics}
\end{wrapfigure}

\textbf{Example-Based CF: Constraint Feasibility increases with no. of labelled CFs.}
A key question for the Example-Based method is the number of labelled CF examples it needs. Using the Adult dataset and the non-decreasing Age constraint,  we show the \texttt{Constraint-Feasibility Score} of \examplecf as we increase the number of labelled CF examples (Figure~\ref{fig:other-metrics} (a)). For the global constraint, we find that the Feasibility Score increases with labelled inputs, reaching nearly $80\%$ with 100 labels. 
Compared to prior work that does a separate optimization for each CF~\cite{dhurandhar2018explanations,DiverseCf}, Example-Based method is also computationally efficient. We show its comparison to CEM in Suppl.D.
\\

\textbf{BaseVAE for CF generation}
Apart from feasibility, we conduct experiments with the BaseVAE method to test it for CF example generation on MNIST; the details can be found in the section Suppl.~\ref{sec:mnist-exp-base-vae}.

\section{Related Work}
Our work builds upon the literature on explainable ML~\cite{lime, shap} by focusing on a specific type of explanation through  counterfactual examples~\cite{wachter2017counterfactual}. Most CF generation methods rely on separate optimizations for each input, based on original features~\cite{russell2019efficient,DiverseCf},  a latent representation~\cite{joshi2019towards}, or using a generative adversarial network~\cite{liu2019generative}. We also build on the fundamental work on counterfactuals~~\cite{pearl2009book}.

To account for feasibility, different statistical notions of feasibility have been proposed, based on adherence to the training distribution~\cite{dhurandhar2018explanations}, distribution of the target class~\cite{van2019interpretable}, the likelihood of the intermediate points in reaching a CF example~\cite{poyiadzi2020face}, and on specifying which features can be changed~\cite{ustun2019actionable}. ~\cite{liu2019generative} rely on GAN's to restrict the explanations in semantically meaningful space. In a critical commentary, \cite{barocas2020hidden} raise concern that feasibility cannot be learned only from training data distribution. Related to our framework on expressing feasibility in terms of causal constraints, ~\cite{DiverseCf} point to the importance of causal constraints for feasibility, but do not provide a method for generating feasible CF examples. 
Our paper extends this line of work by formally defining feasibility, providing a theoretical justification of the counterfactual loss and proposing a VAE-based generative model that can preserve causal constraints. 
\cite{parafita2019explaining,karimi2020algorithmic,joshi2019towards} also focus on the role of causality towards feasible counterfactual explanations but their approach requires full knowledge of the causal model.

\section{Conclusion}
Feasibility in CF explanations is hard to  quantify.
In this work, we provided a generative model and two methods for modeling causal constraints. In future work, we will explore how to integrate domain knowledge and available data to learn causal constraints.







\small
\bibliographystyle{unsrt}
\bibliography{cfexplain}

\begin{thebibliography}{10}

\bibitem{lime}
Marco~Tulio Ribeiro, Sameer Singh, and Carlos Guestrin.
\newblock Why should i trust you?: Explaining the predictions of any
  classifier.
\newblock In {\em Proceedings of the 22nd ACM SIGKDD international conference
  on knowledge discovery and data mining}, pages 1135--1144. ACM, 2016.

\bibitem{wachter2017counterfactual}
Sandra Wachter, Brent Mittelstadt, and Chris Russell.
\newblock Counterfactual explanations without opening the black box: Automated
  decisions and the gpdr.
\newblock {\em Harv. JL \& Tech.}, 31:841, 2017.

\bibitem{goodfellow2014explaining}
Ian~J Goodfellow, Jonathon Shlens, and Christian Szegedy.
\newblock Explaining and harnessing adversarial examples.
\newblock {\em arXiv preprint arXiv:1412.6572}, 2014.

\bibitem{ustun2019actionable}
Berk Ustun, Alexander Spangher, and Yang Liu.
\newblock Actionable recourse in linear classification.
\newblock In {\em Proceedings of the Conference on Fairness, Accountability,
  and Transparency}, pages 10--19, 2019.

\bibitem{dhurandhar2018explanations}
Amit Dhurandhar, Pin-Yu Chen, Ronny Luss, Chun-Chen Tu, Paishun Ting,
  Karthikeyan Shanmugam, and Payel Das.
\newblock Explanations based on the missing: Towards contrastive explanations
  with pertinent negatives.
\newblock In {\em Advances in Neural Information Processing Systems}, pages
  592--603, 2018.

\bibitem{poyiadzi2020face}
Rafael Poyiadzi, Kacper Sokol, Raul Santos-Rodriguez, Tijl De~Bie, and Peter
  Flach.
\newblock Face: Feasible and actionable counterfactual explanations.
\newblock In {\em Proceedings of the AAAI/ACM Conference on AI, Ethics, and
  Society}, pages 344--350, 2020.

\bibitem{DiverseCf}
Ramaravind~Kommiya Mothilal, Amit Sharma, and Chenhao Tan.
\newblock Explaining machine learning classifiers through diverse
  counterfactual explanations.
\newblock In {\em Proceedings of the ACM FAT* conference (to appear)}, 2020.

\bibitem{pearl2009book}
Judea Pearl.
\newblock {\em Causality}.
\newblock Cambridge University Press, 2009.

\bibitem{vae}
Diederik~P Kingma and Max Welling.
\newblock Auto-encoding variational bayes.
\newblock {\em arXiv preprint arXiv:1312.6114}, 2013.

\bibitem{sohn2015cvae}
Kihyuk Sohn, Honglak Lee, and Xinchen Yan.
\newblock Learning structured output representation using deep conditional
  generative models.
\newblock In {\em Advances in neural information processing systems}, pages
  3483--3491, 2015.

\bibitem{magrini2017sangiovese}
Alessandro Magrini, Stefano Di~Blasi, and Federico~Mattia Stefanini.
\newblock A conditional linear gaussian network to assess the impact of several
  agronomic settings on the quality of tuscan sangiovese grapes.
\newblock {\em Biometrical Letters}, 2017.

\bibitem{bnlearn-repo-url}
Bayesian network repository.
\newblock \url{http://www.bnlearn.com/bnrepository/}{ sangiovese}.

\bibitem{adult-dataset}
Ronny Kohavi and Barry Becker.
\newblock Uci machine learning repository.
\newblock {\em https://archive.ics.uci.edu/ml/datasets/adult}, 1996.

\bibitem{van2019interpretable}
Arnaud Van~Looveren and Janis Klaise.
\newblock Interpretable counterfactual explanations guided by prototypes.
\newblock {\em arXiv preprint arXiv:1907.02584}, 2019.

\bibitem{shap}
Scott~M Lundberg and Su-In Lee.
\newblock A unified approach to interpreting model predictions.
\newblock In I.~Guyon, U.~V. Luxburg, S.~Bengio, H.~Wallach, R.~Fergus,
  S.~Vishwanathan, and R.~Garnett, editors, {\em Advances in Neural Information
  Processing Systems 30}, pages 4765--4774. Curran Associates, Inc., 2017.

\bibitem{russell2019efficient}
Chris Russell.
\newblock Efficient search for diverse coherent explanations.
\newblock In {\em Proceedings of FAT*}, 2019.

\bibitem{joshi2019towards}
Shalmali Joshi, Oluwasanmi Koyejo, Warut Vijitbenjaronk, Been Kim, and Joydeep
  Ghosh.
\newblock Towards realistic individual recourse and actionable explanations in
  black-box decision making systems.
\newblock {\em arXiv preprint arXiv:1907.09615}, 2019.

\bibitem{liu2019generative}
Shusen Liu, Bhavya Kailkhura, Donald Loveland, and Yong Han.
\newblock Generative counterfactual introspection for explainable deep
  learning.
\newblock {\em arXiv preprint arXiv:1907.03077}, 2019.

\bibitem{barocas2020hidden}
Solon Barocas, Andrew~D Selbst, and Manish Raghavan.
\newblock The hidden assumptions behind counterfactual explanations and
  principal reasons.
\newblock In {\em Proceedings of the 2020 Conference on Fairness,
  Accountability, and Transparency}, pages 80--89, 2020.

\bibitem{parafita2019explaining}
{\'A}lvaro Parafita and Jordi Vitri{\`a}.
\newblock Explaining visual models by causal attribution.
\newblock {\em arXiv preprint arXiv:1909.08891}, 2019.

\bibitem{karimi2020algorithmic}
Amir-Hossein Karimi, Bernhard Sch{\"o}lkopf, and Isabel Valera.
\newblock Algorithmic recourse: from counterfactual explanations to
  interventions.
\newblock {\em arXiv preprint arXiv:2002.06278}, 2020.

\bibitem{alibi}
Algorithms for monitoring and explaining machine learning models.
\newblock \url{https://docs.seldon.io/projects/alibi}{ alibi}.

\bibitem{lecun1998gradient}
Yann LeCun, L{\'e}on Bottou, Yoshua Bengio, and Patrick Haffner.
\newblock Gradient-based learning applied to document recognition.
\newblock {\em Proceedings of the IEEE}, 86(11):2278--2324, 1998.

\end{thebibliography}

\newpage    
\appendix
\appendix


\section{Supplementary Materials: Theorem 1 Proof}
\begin{theorem}
The evidence lower bound to optimize CF objective $\Pr(\vecx^{cf}| y', \vecx)$ for global feasibility is:
\begin{equation}
\begin{split}
\small
\ln \Pr(\vecx^{cf}| y', \vecx) \geq  \mathbb{E}_{Q(\vecz|\vecx, y')} \ln P( \vecx^{cf} | \vecz, y', \vecx) \\
- \mathbb{KL}(Q(\vecz|\vecx, y') ||P(\vecz| y', \vecx))
\end{split}
\end{equation}
\end{theorem}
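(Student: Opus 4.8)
The plan is to follow the standard variational (ELBO) derivation of \cite{vae}, adapted to the conditional setting in which both $\vecx$ and the target class $y'$ are held fixed as conditioning context throughout. The object of interest, $\ln \Pr(\vecx^{cf}\mid y',\vecx)$, does not depend on the latent code $\vecz$, so I would first rewrite it as an expectation of itself under the encoder distribution $Q(\vecz\mid\vecx,y')$; this innocuous step is what lets a latent variable enter the argument at all.

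Next I would introduce $\vecz$ explicitly via Bayes' rule, applied with $y'$ and $\vecx$ always on the right of the conditioning bar, writing $P(\vecx^{cf}\mid y',\vecx) = P(\vecx^{cf},\vecz\mid y',\vecx)/P(\vecz\mid \vecx^{cf}, y',\vecx)$, and then multiply and divide the resulting ratio by $Q(\vecz\mid\vecx,y')$. Taking the expectation over $Q$ and splitting the $\ln$ of the product separates the log-evidence into two pieces: a term $\mathbb{E}_{Q(\vecz\mid\vecx,y')}\ln\big(P(\vecx^{cf},\vecz\mid y',\vecx)/Q(\vecz\mid\vecx,y')\big)$, and a cross term equal to $\mathbb{KL}\big(Q(\vecz\mid\vecx,y')\,\|\,P(\vecz\mid\vecx^{cf},y',\vecx)\big)$. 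Since a KL divergence is non-negative, dropping this second (posterior-gap) term yields the desired lower bound. Finally I would factor the joint as $P(\vecx^{cf},\vecz\mid y',\vecx)=P(\vecx^{cf}\mid\vecz,y',\vecx)\,P(\vecz\mid y',\vecx)$, which cleaves the bound into the reconstruction expectation $\mathbb{E}_{Q(\vecz\mid\vecx,y')}\ln P(\vecx^{cf}\mid\vecz,y',\vecx)$ and the term $-\mathbb{KL}\big(Q(\vecz\mid\vecx,y')\,\|\,P(\vecz\mid y',\vecx)\big)$, matching the statement exactly.

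There is no deep obstacle here, since the result is the conditional-VAE ELBO; the only real care is bookkeeping. Every probability factor must carry the same fixed conditioning on $\vecx$ and $y'$, otherwise the Bayes-rule factorization and the joint decomposition will not line up, and the two distinct KL terms (the one dropped, against the posterior $P(\vecz\mid\vecx^{cf},y',\vecx)$, versus the one surviving in the bound, against the prior $P(\vecz\mid y',\vecx)$) must not be conflated. The single substantive inequality in the whole argument is the non-negativity of the dropped $\mathbb{KL}$ term; equivalently one can reach the same bound in one stroke by applying Jensen's inequality to pull $\ln$ inside the expectation over $Q$, which I would note as the shortcut alternative.
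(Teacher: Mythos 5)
Your proposal is correct. The paper's own proof takes the shortcut you mention at the end: it writes $\ln P(\vecx^{cf}\mid y',\vecx) = \ln \int P(\vecx^{cf},\vecz\mid y',\vecx)\,d\vecz$, multiplies and divides by $Q(\vecz\mid\vecx,y')$ inside the integral, applies Jensen's inequality to pull the logarithm inside the expectation, and then factors the joint exactly as you do to obtain the reconstruction term minus $\mathbb{KL}(Q(\vecz\mid\vecx,y')\,\|\,P(\vecz\mid y',\vecx))$. Your primary route --- the exact decomposition via Bayes' rule, in which the log-evidence equals the ELBO plus $\mathbb{KL}(Q(\vecz\mid\vecx,y')\,\|\,P(\vecz\mid\vecx^{cf},y',\vecx))$, followed by dropping that non-negative posterior-gap term --- reaches the same bound and has the minor advantage of exhibiting the slack in the inequality explicitly: the bound is tight precisely when the encoder matches the true posterior over $\vecz$ given $\vecx^{cf}$. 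The paper's Jensen route is shorter but leaves the gap implicit. Your bookkeeping cautions (keeping $\vecx$ and $y'$ fixed in every conditioning so the factorization $P(\vecx^{cf},\vecz\mid y',\vecx)=P(\vecx^{cf}\mid\vecz,y',\vecx)\,P(\vecz\mid y',\vecx)$ lines up, and not conflating the dropped posterior KL with the surviving prior KL) are exactly the points where such a derivation could go wrong, and you handle them correctly.
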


\begin{proof}
An ideal counterfactual generation model approximates $\vecx$ (proximity) and generates $\vecx^{cf}$ that are valid w.r.t desired class $y'$. Thus for a model we seek to maximize $P(\vecx^{cf}| y', \vecx)$ where $P$ is the underlying probability distribution over $\mathcal{X}$.
\begin{equation}
\small
\begin{split}
   &  \ln{P(\vecx^{cf}| y', \vecx)}  \\
   & = \ln \int  P(\vecx^{cf}, \vecz| y', \vecx) d\vecz \\
                      & =    \ln  \int Q(\vecz|\vecx, y')\frac{ P(\vecx^{cf}, \vecz | y', \vecx) }{Q(\vecz|\vecx, y')} dz \\
                      & \geq \int Q(\vecz|\vecx, y')  \ln  \frac{P(\vecx^{cf}, \vecz | y', \vecx)}{Q(\vecz|\vecx, y')} dz \\
                      & = \mathbb{E}_{Q(\vecz|\vecx, y')} \ln \frac{P(\vecx^{cf}, \vecz |  y', \vecx)}{Q(\vecz|\vecx, y')}\\
                      & = \mathbb{E}_{Q(\vecz|\vecx, y')} \ln P(\vecx^{cf}| \vecz, y', \vecx) - \mathbb{E}_{Q(\vecz|\vecx, y')} \ln \frac{Q(\vecz|\vecx, y')}{P(\vecz| y', \vecx)}
\end{split}
\end{equation}
Where the inequality above is due to Jensen's inequality. 
Using the definition of KL-Divergence, 
\begin{equation*}
\small
\begin{split}
      \ln{P(\vecx^{cf}| y', \vecx)} &\geq \mathbb{E}_{Q(\vecz|\vecx, y')} \ln P(\vecx^{cf}| \vecz, y', \vecx) \\ 
      &- \mathbb{KL}({Q(\vecz|\vecx, y')}|| P(\vecz| y', \vecx))
\end{split}
\end{equation*}
\end{proof}

\section{Defining CF explanations as Interventions on a Structural Causal Model}

\begin{wrapfigure}{r}{0.25\linewidth}
    \includegraphics[width=\linewidth]{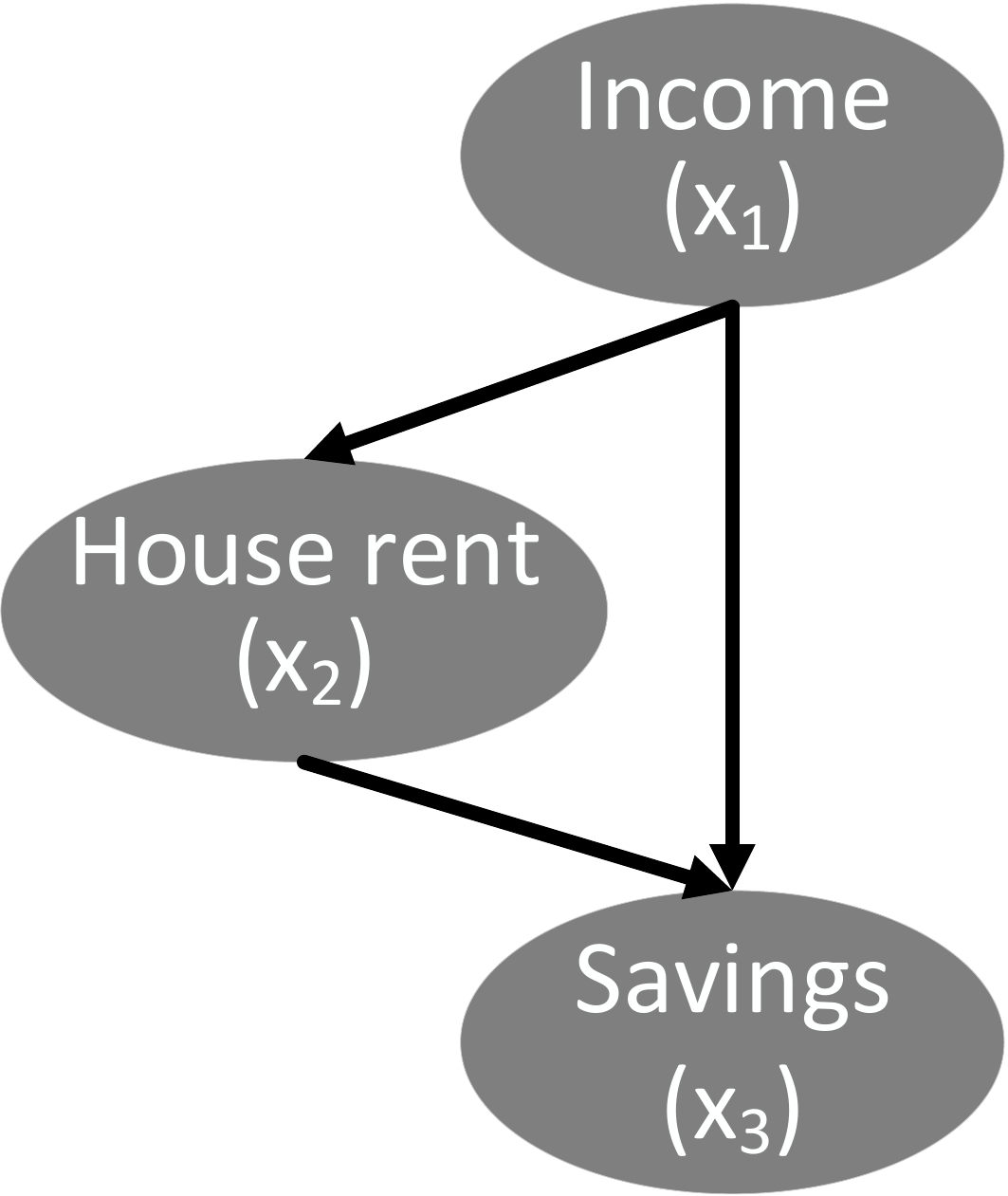}
    \caption{SCM describing the true causal relationships between three input features: Income, House rent, and Savings of a person. These input features are used by a pre-trained black-box ML model that we wish to explain.}
    \label{fig:example-scm}
\end{wrapfigure}
In Section~\ref{connections} we provided a novel distance metric that captures the feasibility of perturbations from the original input, based on a structural causal model (SCM) that describes causal relationships between different features. Since changing a feature $x_{v}$ can change the values of other features, the SCM helps us model the downstream changes due to change in $x_{v}$. However, unlike  the standard SCM intervention that cuts off all incoming edges (causes) on the perturbed features~\cite{karimi2020algorithmic}, we assume that the perturbed feature can also be affected by its other causes. This is because CF explanations are intended to present perturbations that are feasible in the real world and it is unlikely for a person to change a feature to any value independent of its causes. That is,  any suggested perturbation in a feature  that does not satisfy its relationship with its causal parents will not be possible in the real world. 

For example, consider the features represented by the SCM in Figure~\ref{fig:example-scm} where we assume that an ML model uses these features to predict a loan decision. Perturbing house rent can be considered as an intervention that affects savings of a person and may also lead to a counterfactual example by  changing their original decision outcome. However house rent cannot be changed independently of the person's income (its causal parent in Figure~\ref{fig:example-scm}): a simple feasibility constraint is that (perturbed)   house rent cannot exceed (optionally perturbed) income in any CF example. Therefore, rather than independent interventions that are suitable when estimating causal  effect (e.g., estimating the effect of a drug treatment through a randomized experiment), we employ a modified version where the causes continue to affect the perturbed feature. 

In situations where an independent intervention on features is possible, those features can be considered as exogenous variables  $U$ when computing the $\operatorname{DistCausal}$ metric from Section~\ref{Causal-CFVAE} (under a modified SCM wherein incoming edges to such features are cut off).


\textbf{Comparison to the standard proximity metric for CF explanations.}
Note that unlike the standard distance metric for proximity, we do not compare the proposed $x_v^{cf}$ to the original input's feature value $x_v$, but rather compare $x_v^{cf}$ to its predicted value $f(x_{v_{p1}}^{cf},..,x_{v_{pk}}^{cf})$ based on the values of its causal parents in $\vecx^{cf}$. Here $f: \mathbb{E}[x_v|x_{vp1},..., x_{vpk}]$ is the conditional expected value based on the SCM.

When the distance metric  is $\ell_2$, the above can be equivalently written as the distance between the \emph{relative} change between $x_v$ and $x_v^{cf}$, and the expected change between  $f(x_{v_{p1}},..,x_{v_{pk}})$ and  $f(x_{v_{p1}}^{cf},..,x_{v_{pk}}^{cf})$ as predicted by the SCM.
\begin{equation}
    \begin{split}
        \Delta_v &=  x_v^{cf} - x_v \\
        \Delta_{predictedv} &= ( f(x_{v_{p1}}^{cf},..,x_{v_{pk}}^{cf}) +\epsilon_1) -  (f(x_{v_{p1}},..,x_{v_{pk}}) + \epsilon_2)\\
        \mathbb{E}[\Delta_{predictedv}] &=f(x_{v_{p1}}^{cf},..,x_{v_{pk}}^{cf})  -  f(x_{v_{p1}},..,x_{v_{pk}}) 
    \end{split}
\end{equation}
where $\epsilon_1$ and $\epsilon_2$ are mutually independent zero-mean errors.  Then the distance can be written as,
\begin{equation}
    \begin{split}
        \operatorname{DistCausal_v}(x_{v}, x_{v}^{cf}) &= \operatorname{Dist}(\mathbb{E}[\Delta_v], \mathbb{E}[\Delta_{predictedv}]) \\
                                                       &= \ell_2(x_v^{cf}, f(x_{v_{p1}}^{cf},..,x_{v_{pk}}^{cf}))
    \end{split}
    \end{equation}
    averaged over different input values. 

\section{Implementation Details and Results on Bayesian Networks and Adult Dataset}
Here we provide implementation details and additional results on the Bayesian network and Adult datasets.
\subsection{Implementation Details}
\subsubsection{ML Model Architecture}
We first describe the architecture of the ML model. The ML model for all the datasets was trained for 100 epochs, batch size 32, learning rate $10^{-3}$ with Adam optimizer and Cross Entropy Loss, with  80-10-10\% training, validation and test datasets. It achieved test accuracy of 87\% on Simple-BN, 83.5\% on Sangiovese, and 89.3\% on Adult dataset.

Each ML model comprises of two layers as shown below:
\begin{itemize}
    \item Hidden Layer(data-size, hidden-dim)
    \item Hidden Layer(hidden-dim, num-classes)
\end{itemize}
 The values of \textit{hidden-dim} is chosen as 10 and value of \textit{num-classes} is 2 as it is  a binary classification task across the three datasets.

\subsubsection{BaseVAE  Architecture}
\label{sec:BaseGenCF-Implementation}
Here we provide the implementation details of the base variational encoder decoder used in all our different methods. Both the encoder and decoder are modeled as Neural Networks (NN) with multiple hidden layers and non linear activations. Encoder comprises of two Neural Networks: one NN is used to estimate the mean and other NN is used to estimate the variance of posterior distribution $q(z|x,y_k)$. Both the networks for estimating mean and variance have the same architecture as described below, with the only difference that the variance network having an additional Sigmoid activation at the end to ensure the variance is positive. Similarly, decoder comprises of a neural network to estimate the counterfactual from the latent encoding and the target class.

\textbf{Encoder Architecture}:

\begin{itemize}
    \item Hidden Layer 1(DataSize+1, 20), BatchNorm, Dropout(0.1), ReLU
    \item Hidden Layer 2(20, 16), BatchNorm, Dropout(0.1), ReLU
    \item Hidden Layer 3(16, 14), BatchNorm, Dropout(0.1), ReLU
    \item Hidden Layer 4(14, 12), BatchNorm, Dropout(0.1), ReLU
    \item Hidden Layer 5(12, LatentDim)
    \item Sigmoid (In case of variance network only)
\end{itemize}

\textbf{Decoder Architecture}:

\begin{itemize}
    \item Hidden Layer 1(LatentDim+1, 12), BatchNorm, Dropout(0.1), ReLU
    \item Hidden Layer 2(12, 14), BatchNorm, Dropout(0.1), ReLU
    \item Hidden Layer 3(14, 16), BatchNorm, Dropout(0.1), ReLU
    \item Hidden Layer 4(16, 20), BatchNorm, Dropout(0.1), ReLU
    \item Hidden Layer 5(20, DataSize), Sigmoid
\end{itemize}

The latent space dimension ($LatentDim$) is set to 10 for all the different methods and datasets. Both the encoder and the decoder are conditioned on the target counterfactual class. Hence, the \texttt{Hidden Layer 1} in the Encoder takes the data point concatenated with the target class as the input. Similarly, the \texttt{Hidden Layer 1} in the Decoder takes the latent sample concatenated with the target class as the input. 

For all datasets, we use 
the SGD optimizer with learning rate $10^{-3}$ for 50 epochs. The batch size for the different datasets are: Simple-BN : 64, Sangiovese : 512, Adult : 2048. 

\subsubsection{Contrastive Explanantions}
For experiments involving the Contrastive Explanations (\texttt{CEM} ) method ~\cite{dhurandhar2018explanations}, we used the implementation provided by the open source library ALIBI~\cite{alibi}. Since the choice of auto encoder is not specified in ALIBI, we use an auto encoder with the same architecture as defined above in \textit{Base VAE Architecture} section for a fair comparison. The only difference being that the Encoder and Decoder are not conditioned on the target class. 

\subsubsection{HyperParameter tuning} 

For a fair comparison between methods, we optimize hyperparameters using the validation set and use random search for 100 iterations. Since an ideal counterfactual needs to satisfy feasibility, target-class validity, and proximity, we select the hyperparameters that lead to maximum feasibility, while still obtaining more than 90\% target class validity and proximity at least $\tau$. $\tau$ was conservatively selected to remove models that result in much lower proximity than the BaseVAE method (and thus will be less useful in practice).
In our experiments, we found that all methods achieved near 100\% target-class validity.

The following threshold values of $\tau$ were used depending on the dataset:
\begin{itemize}
    \item BN1: 9.0 (Cont. Proximity)
    \item Sangiovese: 20.0 (Cont. Proximity)
    \item Adult: 8.0 (Cont. Proximity) and 5.0 (Cat. Proximity)
\end{itemize}

We did not include the other metrics like Interpretability Score and Causal Edge score during the hyperparameter tuning, to allow an independent evaluation on those metrics. 

The final optimal values of the hyper parameters for each dataset are reported in Table \ref{tab:hyperparam}. As per the \texttt{BaseGenCFLoss} equation (section 3.1), we have the two hyperparameters $\beta$ and $\lambda$  common across all the approaches; which we refer to as \texttt{Margin} and \texttt{Validity} in the table \ref{tab:hyperparam}. 

Additionally, there is an extra  hyperparameter involved with approaches like \modelcf, \approxcf, and \examplecf. We denote this extra hyperparameter as \texttt{Feasibility} in the Table \ref{tab:hyperparam}. For \examplecf, it corresponds to $\lambda_{o}$ which controls the trade-off between modeling CF and the oracle (Section 3.2). 
In the case of \modelcf, it corresponds to the trade-off between Exogenous and Endogenous Loss terms in \texttt{DistCausal} loss term, as described below:
\begin{equation*} 
\label{CausalProximity}
\tiny
\begin{split}
\operatorname{DistCausal}(\vecx, \vecx^{cf}) = \sum_{u \in U}^{}  \operatorname{Dist_u} (\vecx_{u}^{cf}, \vecx_{u}) + \lambda_{s}*\sum_{v \in V}^{} \operatorname{DistCausal_v}(\vecx_{v}, \vecx_{v}^{cf}) 
\end{split}
\end{equation*}
In the case of \approxcf, for Unary constraints, it corresponds to the trade-off between \texttt{BaseGenCFLoss} and the Hinge Loss on feature of interest. For the Binary constraints, it would follow the same procedure as for \modelcf. 

For the Contrastive Explanations (CEM) method, the optimal hyperparameters for each dataset are as follows: (refer to  open source library ALIBI~\cite{alibi} for details regarding each hyper parameter)

\begin{itemize}
    \item Simple BN: Beta (0.608), Kappa (0.021), Gamma (8.0), CSteps (3), Max Iterations (1000)
    \item Sangiovese: Beta (0.652), Kappa (0.041), Gamma (9.0), CSteps (5), Max Iterations (1000)
    \item Adult: Beta (0.911), Kappa (0.241), Gamma (0.0), CSteps (9), Max Iterations (1000)
\end{itemize}

\begin{table}[t]
\small
\caption{Hyperparameter tuning decription for all the methods and datasets}
\centering
\begin{tabular}{llccc}
\toprule
\textbf{Dataset} & \textbf{Method}    &  Margin & Validity  & Feasibility  \\ \midrule
& \approxcf  & 0.087    & 96  & 0.1 \\ 
Simple-BN  & \examplecf  & 0.15  & 150   & 2350 \\  
 & \modelcf  & 0.015    & 85  & 55 \\  \midrule

 & \approxcf  & 0.306   & 71  & 73 \\ 
Sangiovese & \examplecf  & 0.02  & 25   & 1085 \\  
 & \modelcf & 0.319    & 89 & 77     \\  \midrule

Adult-Age & \approxcf  & 0.764   & 29  & 192 \\ 
 & \examplecf & 0.084  & 159   & 5999 \\ \hline
    
Adult-Age-Ed & \approxcf  & 0.344   & 76 & 87 \\ 
 & \examplecf  & 0.117  & 175   & 3807 \\ \bottomrule
\end{tabular}
\label{tab:hyperparam}
\end{table}

\subsubsection{Evaluation Metrics}
We define the following metrics to evaluate CF explanations; considering the case of N data points $\{x_{i}\}$, with K CF's $\{x^{cf}_{i,j}\}$ sampled for each data point.

\begin{itemize}[itemsep=0pt,leftmargin=*,topsep=2pt]
    
    \item  \texttt{Target-Class Validity}: \% of CFs whose predicted class by the ML classifier is the same as the target class: $ \frac{ \sum_{i=1}^{N} \sum_{j=1}^{K} \mathbbm{1}[f(x_{i,j}^{cf}) == t_c] }{N*K}$.
    
    \item  \texttt{Cont-Proximity}: Proximity for continuous features as the average $\ell_1$-distance between $\vecx^{cf}$ and $\vecx$ in units of median absolute deviation for each features~\cite{DiverseCf}; It is multiplied by $(-1)$ so that higher values are better. $\frac{ \sum_{i=1}^{N} \sum_{j=1}^{K} \sum_{p=1}^{d_{cont}}  \frac{ x_{i,j}^{cf, p} - x_{i}^{p} }{MAD_{p}} }{N*K*d_{cont}}$
    
    \item     \texttt{Cat-Proximity}: Proximity for categorical features as the total number of mismatches on categorical value between $\vecx^{cf}$ and $\vecx$ for each feature~\cite{DiverseCf}; It is multiplied by $(-1)$ so that higher values are better. $\frac{ \sum_{i=1}^{N} \sum_{j=1}^{K} \sum_{p=1}^{d_{cat}}   \mathbbm{1}[x_{i,j}^{cf,p} \neq x_{i}^{p}] }{N*K*d_{cat}}$

    \item   \texttt{Constraint Feasibility Score}: For Simple-BN and Sangiovese datasets, the constraints mentioned can be observed as a combination of two sub constraints: X1 and X2. For example, in the case of Simple-BN dataset, X1 corresponds to ``{\small ($\vecx_1,\vecx_2$ increase $\implies$ $\vecx_3$ increases)}''; while X2 correspond to ``{\small ($\vecx_1,\vecx_2$ decrease $\implies$ $\vecx_3$ decrease)}''
    
    Hence, to ensure good performance at satisfying both sub-constraints, we define the following metric for constraint feasibility: $ \frac{2*S1*S2}{S1+S2} $
    
    where S1, S2 represent the \% of CFs satisfying the sub constraints X1, X2 respectively.
     
    However in the case of Adult dataset, due to the additional non monotonic constraint of Education level cannot decrease, we simply report the percentage of Counterfactual satisfying the complete constraint $\tt C2$ on the Adult dataset.
    
    For unary constraints, like $\tt C1$ on the Adult dataset, we always report the percentage of Counterfactuals satisfying the constraint. 
    
    \item \texttt{Causal-Edge Score}: Ratio of the Log Likelihood of CFs $x^{cf}$ and the Log Likelihood of the original data point $x$  w.r.t. to the set of given causal edges distribution $V$ : $ \frac{ \sum_{i=1}^{N} \sum_{j=1}^{K} \sum_{v \in V}^{}  \frac{ \log p(x^{cf}_{v} | x^{cf}_{vp1},..,x^{cf}_{vpk})}{ \log p(x_{v} | x_{vp1},..,x_{vpk})  } }{N*K} $

    \item \texttt{Interpretability Score: } The IM1 metric as defined by \cite{van2019interpretable}, the ratio of the reconstruction loss of CFs given the target class Auto Encoder and the reconstruction loss given the original class Auto Encoder :  $ \frac{ \sum_{i=1}^{N} \sum_{j=1}^{K} \frac{\|\vecx^{cf} - AE_{t}(\vecx^{cf})\|}{\|\vecx^{cf} - AE_{o}(\vecx^{cf})\|} }{N*K} $. Thus, lower  IM1 score is  better. 
\end{itemize}

\subsubsection{ Constraint Modelling in \approxcf}
For the case of Simple-BN and Sangiovese dataset, the feasibility constraint is monotonic, hence we use the \textit{Binary Constraints} formulation of \approxcf, as described in the Section 2.4 in the main submission. 

For the case of Adult Dataset, the constraint C1 is modelled using the \textit{Unary Constraints} formulation, with a Hinge Loss on the feature Age. 
The constraint C2 in Adult Dataset is more complex than the previous constraints, since the feature Education is categorical. We model the constraint C2 under the \textit{Unary Constraints} formulation, since it can be viewed as combinations of two unary constraints: Age cannot decrease and Education cannot decrease. The Hinge Loss on categorical variable Education is implemented by converting the embedding of categorical variable Education into a continuous value.
We rank different education levels with increasing score and take a weighted sum of the categorical embedding with the scores assigned for each education category.  Hence, we get a continuous score for education feature embedding which is representative of the level/rank of education. Now, we can apply the same Hinge Loss on the continuous values of education feature to put penalty on counterfactuals that decrease the level of education.

The vector we used for ranking different educations levels is as follows:
\begin{itemize}
    \item HS-Grad, School: 0
    \item Bachelors, Assoc, Some-college: 1
    \item Masters:2 
    \item Prof-school, Doctorate: 3
\end{itemize}

\begin{figure*}[t]
\begin{tabular}{ccc}
    \includegraphics[width=0.30\textwidth]{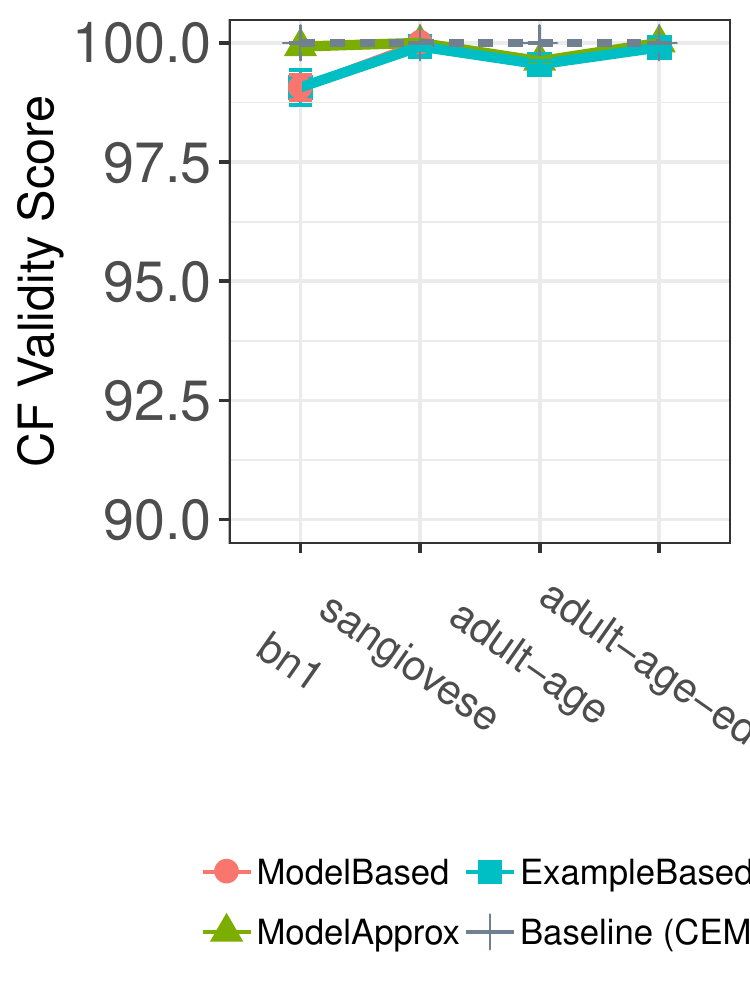} &
    \includegraphics[width=0.30\textwidth]{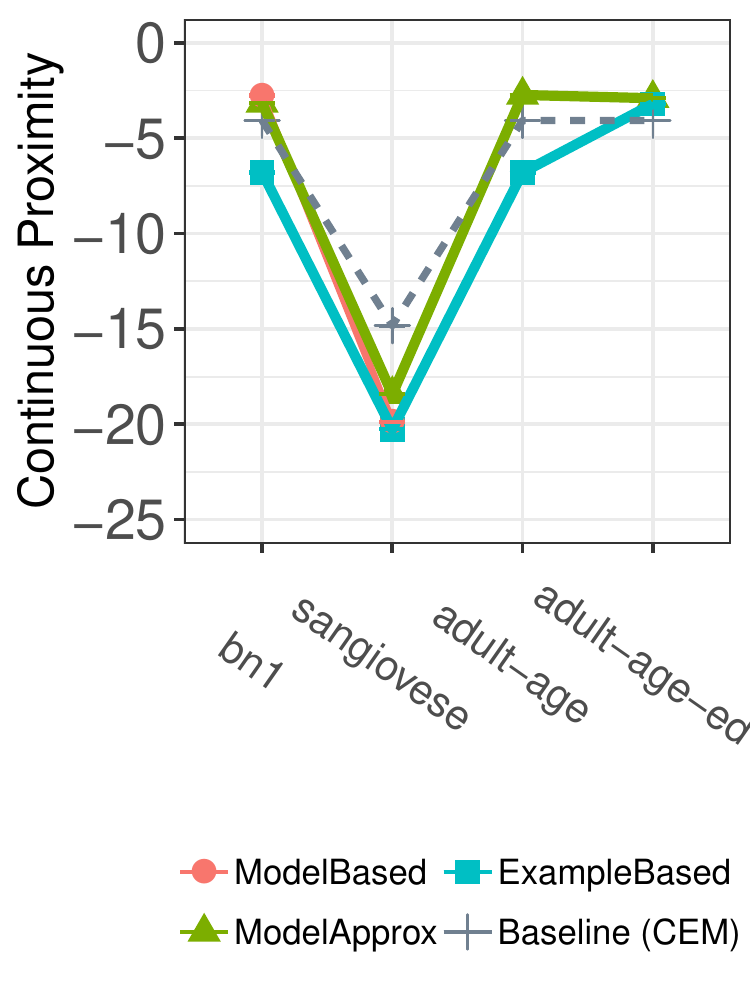} &
\includegraphics[width=0.30\textwidth]{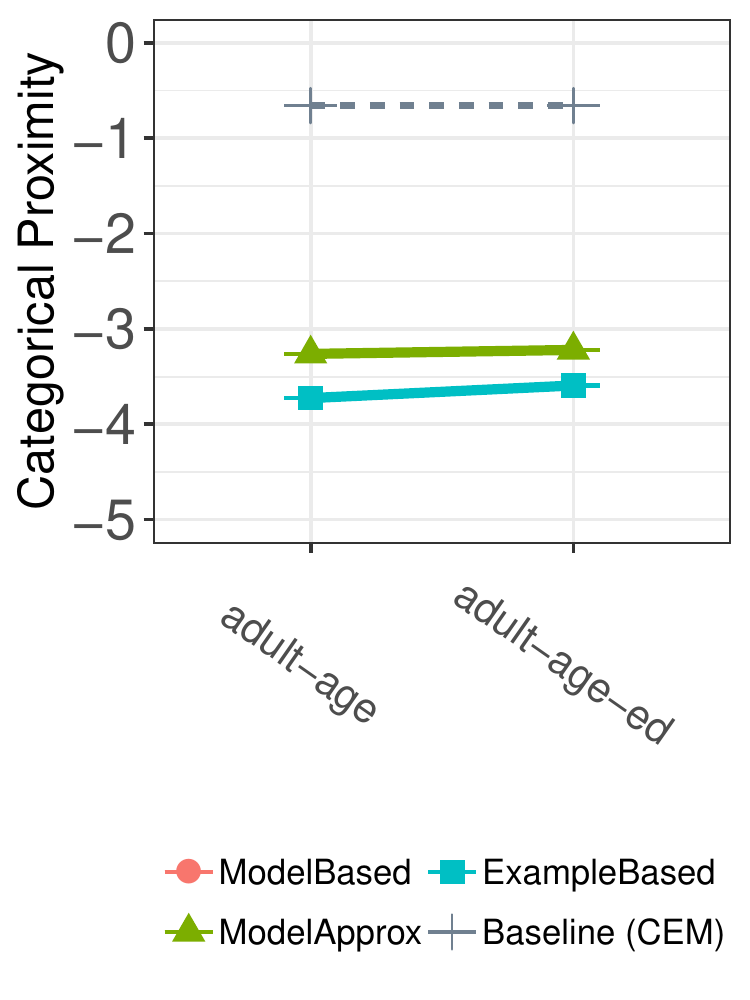} 
\\
(a) & (b) &(c)
\end{tabular}
\caption{Validity,  Continuous Proximity and Categorical Proximity metrics for different CF explanation methods.}
\label{fig:extra-metrics-2}
\end{figure*}

\subsection{Evaluation Results on Additional Metrics}

\textbf{Target-Class Validity: }
Figure \ref{fig:extra-metrics-2}(a) shows the performance of the methods on \texttt{Target-Class Validity}. All the methods achieve near perfect score on this metric across datasets.





\textbf{Continuous Proximity: }
Figure~\ref{fig:extra-metrics-2}(b) shows the methods evaluated on the Cont-Proximity metric. The dataset \texttt{Sangiovese} shows an interesting trend where the approaches with higher Constraint-Feasibility score (\modelcf, \approxcf, \examplecf) perform worse on Cont-Proximity metric as compared to the approaches with lower Constraint-Feasibility score (Baseline (CEM)). This suggests increasing feasibility might induce a trade-off with the continuous proximity in some cases.
 

\textbf{Categorical Proximity: }
Figure \ref{fig:extra-metrics-2}(c) shows the methods evaluated  on the Cat-Proximity metric. The dataset \texttt{Simple-BN} and \texttt{Sangiovese} do not contain any categorical variables, hence we do not include them for this analysis. \texttt{CEM} performs better than other methods on Categorical proximity in both the cases of age (C1) and age-ed (C2) constraint. This may be because CEM tends to not vary categorical features.  Table \ref{tab:qual-examples} provides an example  via generated CF examples for the Adult dataset: \texttt{CEM} does  not change  categorical features like Occupation, MaritalStat, Race unlike \approxcf and \examplecf. Along with Figure~\ref{fig:extra-metrics-2}(b), this result demonstrates the trade-off between feasibility and proximity. \texttt{CEM} is worse at preserving constraints (e.g., in Table~\ref{tab:qual-examples}, CF examples by CEM do not increase the value of Age while increasing the Education level to Masters), but achieves higher proximity score for categorical variables. 

\begin{table*}[t]
\tiny
\caption{Examples of generated counterfactuals on the modified Adult dataset. \examplecf and \approxcf were trained to preserve the Education-Age causal constraint}\label{tab:qual-examples}

\begin{tabular}{l|l|llllllll}
\toprule
\textbf{Adult}          & Method    & AgeYrs & Education   & Occupation   & WorkClass     & Race  & HrsWk & MaritalStat & Sex    \\ \hline

\begin{tabular}[c]{@{}l@{}}Original input\\ (outcome: \textless{}=50K)\end{tabular} & & 41  & Some-college     & Blue-collar      & Private       & Other & 40   & Single   & Male \\ \midrule

\textbf{}                                                                           &  & 46  & Masters & White-Collar        & Private & White & 41   & Married   & Male \\
Counterfactuals                                                                     & \textbf{\approxcf} & 45  & Some-college & White-Collar        & Private & White & 40   & Married   & Male \\
(outcome: \textgreater{}50K)                                                        &  & 47  & Masters & White-Collar        & Private & White & 41   & Married   & Male \\

\midrule

\textbf{}                                                                           &  & 45  & Prof-school & White-Collar        & Private & White & 42   & Married   & Male \\
Counterfactuals                                                                     & \textbf{\examplecf} & 44  & Masters & White-Collar        & Private & White & 42   & Married   & Male \\                                                         
(outcome: \textgreater{}50K)                                                        &  & 47  & Prof-school & White-Collar        & Private & White & 43   & Married   & Male \\

\midrule

\textbf{}                                                                           &  & 41  & Masters & Blue-Collar & Private & Other & 39   & Single  & Male \\
Counterfactuals                                                                     & \texttt{ Baseline (CEM)} & 41  & Some-college & Blue-Collar & Other & Other & 39   & Single  & Male \\                 
(outcome: \textgreater{}50K)                                                        &  & 40  & Masters & Blue-Collar & Private & Other & 41   & Single  & Male \\
\bottomrule
\end{tabular}
\end{table*}

\begin{figure}[h]
    \centering
    \includegraphics[width=0.3\linewidth]{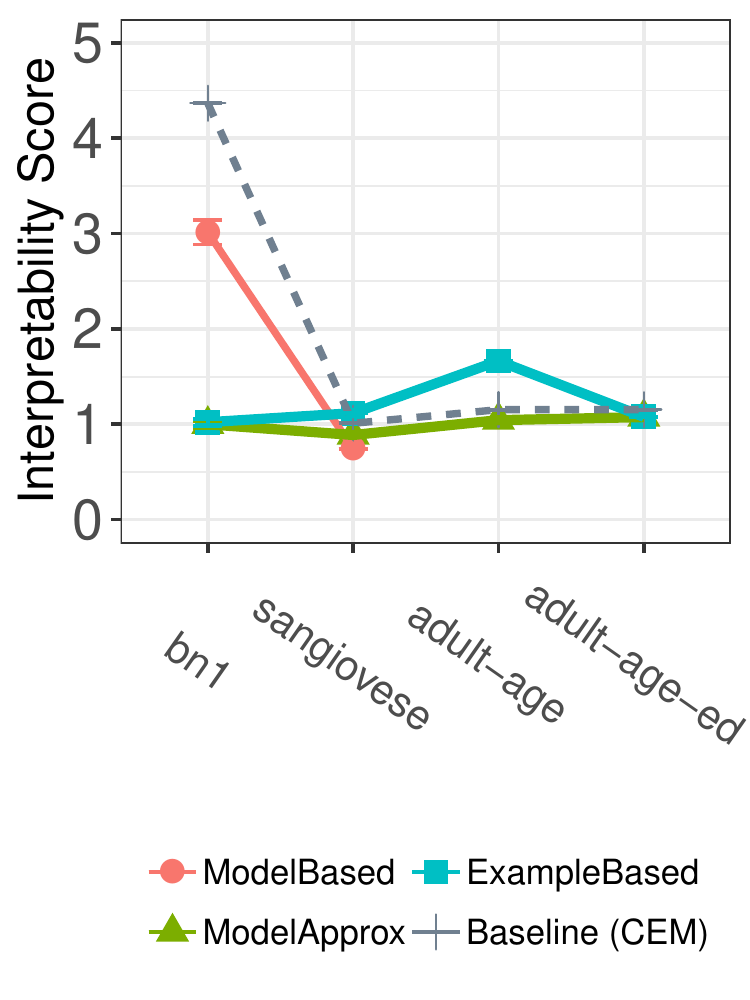}
    \caption{Interpretability score (IM1) for different CF generation methods. Lower IM1 score is better. }
    \label{fig:imscore}
\end{figure}
\textbf{Interpretability Score:}
Figure~\ref{fig:imscore} shows the performance of the methods on the Interpretability score (IM1 metric). \modelcf and \approxcf consistently perform better than Baseline (CEM) across different datsaets, which suggests that \modelcf and \approxcf  do not generate counterfactuals that are far away from the data distribution while preserving feasibility constraints. Also, \examplecf performs worse than Baseline (CEM) on the \texttt{adult-age} dataset despite obtaining a higher Constraint-Feasibility score (Figure~\ref{fig:main-metrics}(a)). This suggests that generating counterfactuals closer to the data distribution is not guaranteed to provide feasibility.


\begin{figure*}[t]
\begin{tabular}{ccc}
\includegraphics[width=0.3\textwidth]{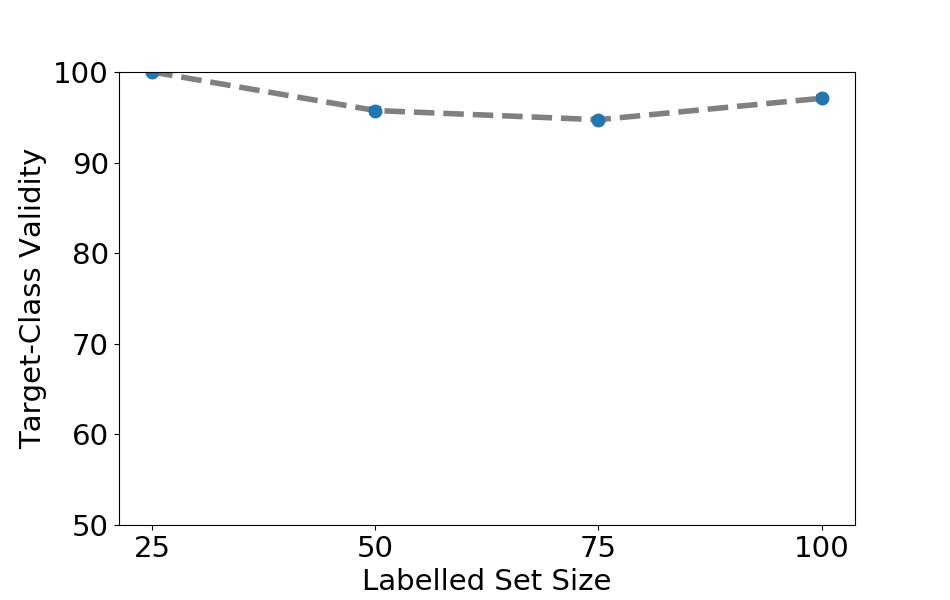} &
\includegraphics[width=0.3\textwidth]{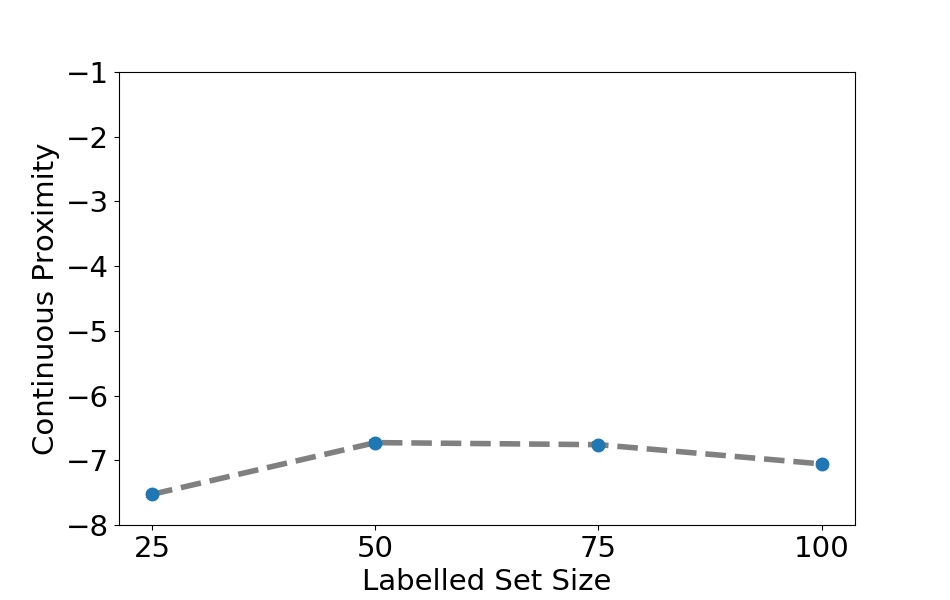} &
\includegraphics[width=0.3\textwidth]{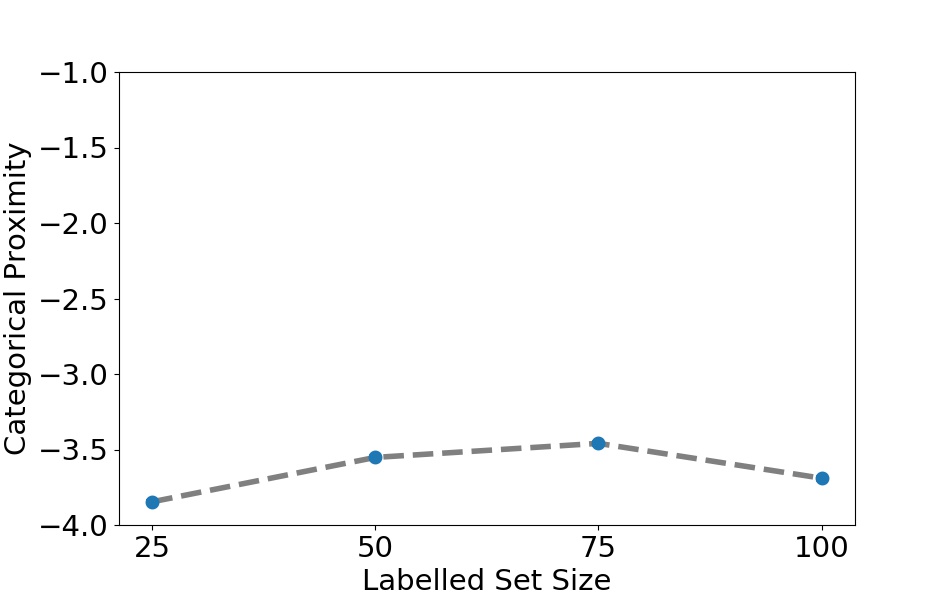} 
\\
(a) & (b) &(c)
\end{tabular}
\caption{Target-Class Validity, Continuous Proximity and Categorical Proximity as \examplecf is trained on more labelled examples  in the Adult dataset.}
\label{fig:extra-adult-global}
\end{figure*}

\section{Computational Efficiency of \examplecf}

Besides feasibility, \examplecf is computationally faster than past methods like \texttt{CEM}, since it uses a generative model. In Figure~\ref{fig:oracle-time}, we show the time required to generate counterfactual examples for $k$ inputs for the Adult dataset and find that \examplecf takes less time per CF example as $k$ increases. \texttt{CEM} has a non-trivial execution time for each input while \examplecf  takes time for initial training but then negligible time for every new input. 
Since \modelcf and \approxcf also rely on a VAE architecture, they are similarly efficient to \examplecf.

\textbf{Additional metrics for Figure 3(a).}
In Figure~\ref{fig:other-metrics}, we showed the Constraint-Feasibility score as the number of labelled examples from the \texttt{Adult} dataset are increased for the \examplecf method. Here we show additional metrics in    
Figure (\ref{fig:extra-adult-global}): Target-Class Validity, Continuous Proximity and Categorical Proximity. 
While we saw a substantial increase in the \texttt{Constraint-Feasibility} Metric (Figure 3 in the main submission), we find that other metrics on validity and proximity do not change much,  as the number of labelled examples increases from 25 to 100. 

\begin{figure*}[tb]
\centering
\includegraphics[width=0.33\textwidth]{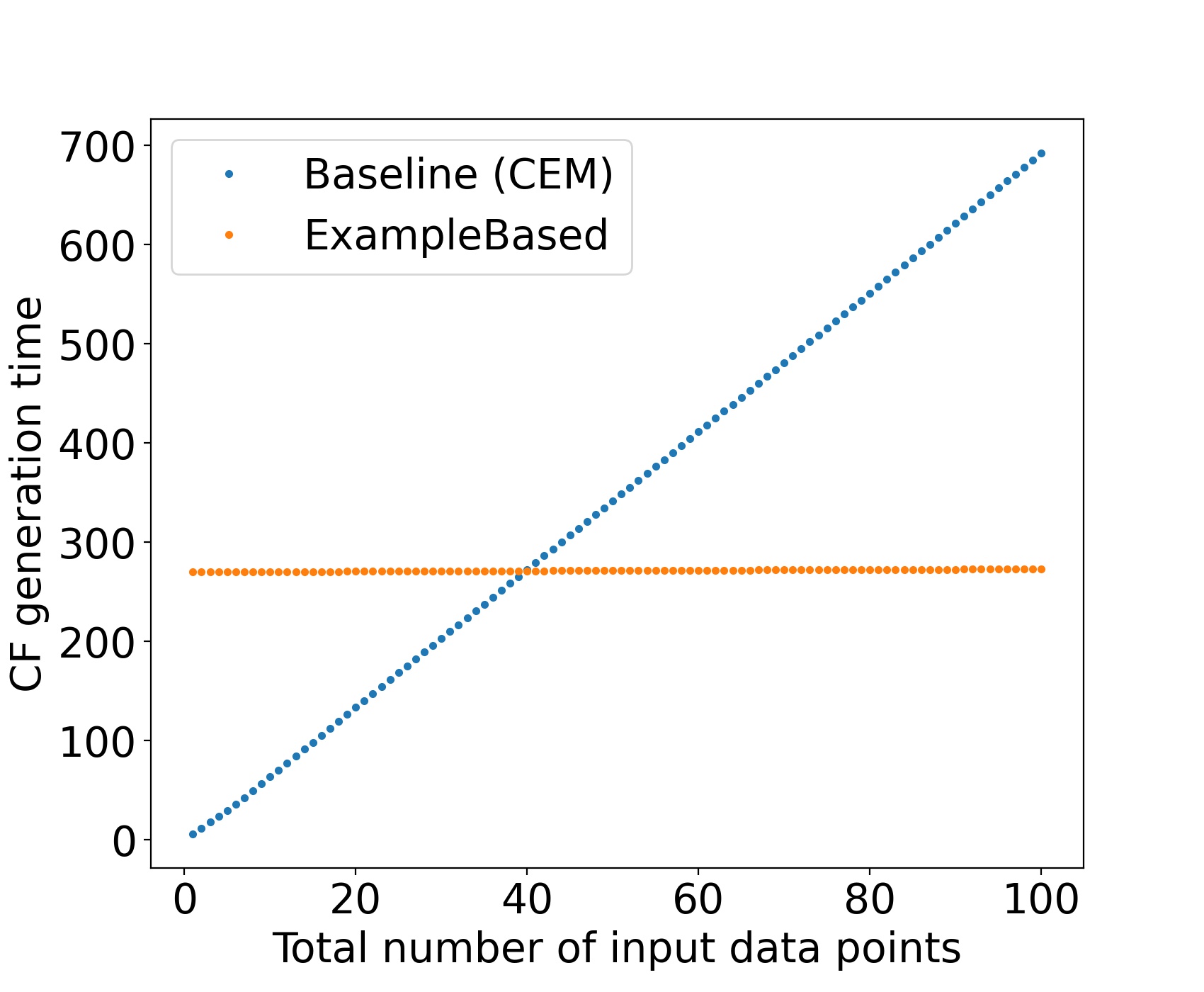}
\caption{Comparison of the time taken to generate CF examples for the proposed \examplecf method and the baseline CEM method. After about 40 inputs, \examplecf is faster than CEM for generating CF examples.} 
\label{fig:oracle-time}
\end{figure*}

\section{Supplementary Materials: Applying Base VAE to an Image Dataset}
\label{sec:mnist-exp-base-vae}

Finally, to show the generality of the proposed method, we apply the BaseVAE CF generator on the   MNIST  dataset~\cite{lecun1998gradient} which contains 70,000 labeled 28x28 images of handwritten digits between 0 and 9. As the ML model to be explained, we train a neural network model on the dataset to predict the digit classes. For explaining this model, we consider the counterfactual generation task on 5 digits (2, 3, 4, 5, 8); with the respective target counterfactual classes (3, 5, 9, 3, 9) as shown in  Figure~\ref{fig:mnist-examples}. We take a subset of 100 samples for each of the 5 digits for training the \texttt{BaseVAE} method. 
Details on the model architecture for the ML model and \texttt{BaseVAE} are provided in section~\ref{sec:mnit-implement}
 
\begin{figure}[tb]
\centering
\begin{tabular}{ccccc}
\includegraphics[width=0.05\textwidth]{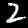}
&
\includegraphics[width=0.05\textwidth]{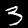}
&
\includegraphics[width=0.05\textwidth]{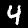}
&
\includegraphics[width=0.05\textwidth]{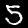}
&
\includegraphics[width=0.05\textwidth]{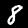}
\\
\includegraphics[width=0.05\textwidth]{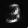}
&
\includegraphics[width=0.05\textwidth]{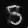}
&
\includegraphics[width=0.05\textwidth]{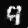}
&
\includegraphics[width=0.05\textwidth]{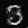}
&
\includegraphics[width=0.05\textwidth]{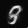}
\\
\end{tabular}
\caption{CF examples generated from MNIST images; the top row denotes the image and the bottom row  denotes the associated counterfactual. In each case, a target class (\emph{digit}) for the counterfactual was provided to the BaseVAE method.}
\label{fig:mnist-examples}
\end{figure}

\subsection{Results}

Figure~\ref{fig:mnist-examples} shows the counterfactuals generated by our \texttt{BaseVAE} approach. Additionally, for evaluation we use two metrics defined by \cite{van2019interpretable}:\texttt{IM1} metric which measures the ratio between the reconstruction of $x_{cf}$ using AutoEncoders trained on target class and original class data: 
{
\small
   $ IM1= \frac{\|\vecx^{cf} - AE_{t}(\vecx^{cf})\|}{\|\vecx^{cf} - AE_{o}(\vecx^{cf})\|} $
    }
 , and the \texttt{IM2} metric, that measures the difference between the reconstruction of $x_{cf}$ using AutoEncoders trained on the target class and all classes: 
 {
\small
   $ IM2= \frac{\|AE(\vecx^{cf}) - AE_{t}(\vecx^{cf})\|}{\|\vecx^{cf}\|} $
    }.
    
To ensure a fair comparison; we used the same ML model and AutoEncoder architecture as \cite{van2019interpretable} for computing the Interpretability Metrics (Table \ref{tab:mnist-res}). The results reported for \texttt{BaseVAE} approach are the mean and standard deviation over 10 runs.

\texttt{BaseVAE} achieves a better IM1 and IM2 score than other approaches, even though it does not explicitly have an autoencoder term in its loss, unlike (B, C, D, E, F) approaches that use the AE or Prototype loss. These results are probably since our \texttt{BaseVAE} generative model,  being trained on the full train data, is able to capture the feature distribution better. 

That said, it is harder to interpret IM2 metric; Table \ref{tab:mnist-add-res} breaks down the performance of the BaseGenCF method on the \texttt{IM1} and \texttt{IM2} metric by anaylzing the performance on its different components. IM1 Numerator corresponds to the reconstruction loss with the target class Auto Encoder: $ \|\vecx^{cf} - AE_{t}(\vecx^{cf})\| $; while the IM1 Denominator corresponds to the reconstruction loss with the original class Auto Encoder: $\|\vecx^{cf} - AE_{o}(\vecx^{cf})\|$. 

IM2 Numerator corresponds to the difference between the reconstruction loss using AutoEncoders trained on the target class and all classes: $\|AE(\vecx^{cf}) - AE_{t}(\vecx^{cf})\|$; while the the IM2 Denominator corresponds to the norm of the counterfactual: $\|\vecx^{cf}\|$. Note that any method's performance on the IM2 metric would depend a lot on the norm of counterfactuals generated (IM2 Denominator), which may not be desirable. This limits a proper interpretation of the IM2 Metric as proposed by \cite{van2019interpretable}. 

To enable a better comparison, here we report the performance of our method BaseVAE on both components of IM2 metric,  We do not have  results of \cite{van2019interpretable} on the different components of IM1 and IM2 metric, hence it is unclear  why we obtain substantially better results  on IM2 Metic ( Table~\ref{tab:mnist-res} ). It may be due to less reconstruction error difference (IM2 Numerator) or due to higher norm counterfactuals (IM2 Denominator) generated by our method. 

\textbf{Time Complexity: }
In addition, the time taken to generate CFs by BaseVAE is order of magnitude lower than other approaches.   That said, we do have a fixed training time ( \textbf{172.81} $\pm$ \textbf{3.97 }seconds ); thus our approach will be efficient for deployments where it can be trained once and used for generating multiple CFs for different inputs.

\begin{table}[h!]
\small
 \caption{ Results for the MNIST dataset. Metrics for other approaches are from Table 1 in ~\cite{van2019interpretable} }
 \centering
 \begin{tabular}{|l| c |c | c | c | c|} 
 \hline
\textbf{Method}  & {\textbf{Time (s)}} & {\textbf{Gradient Steps }} &  {\textbf{IM1}} & {\textbf{IM2(*10)}} \\ 
  \hline   
  A: V & $13.06 \pm 0.23$ & $5158 \pm 82$ & $1.56 \pm 0.03$ & $1.65 \pm 0.04$ \\
 B: VA & $8.40 \pm 0.38$ & $2380 \pm 113$ & $1.36 \pm 0.02$ & $1.60 \pm 0.03$ \\
 C: VP & $2.37 \pm 0.09$ & $751 \pm 31$ & $1.23 \pm 0.02$ & $1.46 \pm 0.03$ \\
 D: VAP & $2.05 \pm 0.08$ & $498 \pm 27$ & $1.26 \pm 0.02$ & $1.29 \pm 0.03$ \\
 E: P & $4.39 \pm 0.04$ & $1794 \pm 12$ & $1.20 \pm 0.02$ & $1.52 \pm 0.03$ \\
 F: PA & $2.86 \pm 0.06$ & $773 \pm 16$ & $1.22 \pm 0.02$ & $1.29 \pm 0.03$ \\
 \hline
BaseVAE & $\textbf{0.033} \pm \textbf{0.001}$ & $\textbf{600 (Training)}$ &  $\textbf{1.07} \pm \textbf{0.07}$ & $\textbf{0.12} \pm \textbf{0.03}$ 
\\
 \hline
 \end{tabular}
 \label{tab:mnist-res}
 \vspace{-0.5em}
\end{table}

\

\begin{table}[h!]
\small
 \caption{ Additional results for the MNIST dataset for the BaseVAE approach }
 \centering
 \begin{tabular}{|l| c |c |} 
 \hline
\textbf{Metric}  & {\textbf{Performance (mean $\pm$ std)}} \\ 
  \hline   
  IM1 Numerator & $3.37 \pm 0.21$ \\
  IM1 Denominator & $3.32 \pm 0.24$ \\
  IM2 Numerator & $1.02 \pm 0.09$ \\
  IM2 Denominator & $107.04 \pm 0.46$ \\
 \hline
 \end{tabular}
 \label{tab:mnist-add-res}
 \vspace{-0.5em}
\end{table}

\subsection{Implementation Details}
\label{sec:mnit-implement}
Th ML Model architecture and the BaseVAE architecture was kept the same as \cite{van2019interpretable}. The details can be seen in this \hyperlink{ https://github.com/SeldonIO/alibi/blob/master/examples/cfproto_mnist.ipynb }{notebook} by \cite{van2019interpretable}. We describe the implementation details of the ML model and BaseVAE below.

\subsubsection{ML Model Architecture}
Architecture of the ML model to be explained is now described. A slight difference we had to introduce from the architecture of \cite{van2019interpretable} was to make $kernel\textnormal{-}size$ as 3 ( instead of 2 ) and $padding$ as 1 to ensure the spatial dimensions of the image are the same as them after applying Conv Layer. The model was trained for 50 epochs, batch size 32, learning rate $10^{-4}$ with Adam optimizer and Cross Entropy Loss, with a 80-10-10\% training, validation and test dataset respectively. It achieved test accuracy of 96\%.

\begin{itemize}
    \item Conv Layer(out-channels=32, kernel-size=3, stride=1, padding=1 ), ReLU
    \item MaxPool(pool-size=2), Dropout(0.3)
    \item Conv Layer(out-channels=64, kernel-size=3, stride=1,     padding=1 ), ReLU
    \item MaxPool(pool-size=2), Dropout(0.3)
    \item Hidden Layer 1(256),  Dropout(0.5), ReLU
    \item Hidden Layer 2(10), Softmax
\end{itemize}

\subsubsection{Auto Encoder Architecture}
The training strategy used was exactly the same as mentioned in the \hyperlink{ https://github.com/SeldonIO/alibi/blob/master/examples/cfproto_mnist.ipynb }{notebook} by \cite{van2019interpretable}. 

Architecture of the Encoder used for computing the IM1, IM2 Metrics: 

\begin{itemize}
    \item Conv Layer(out-channel=16, kernel-size=3, stride=1, padding=0), ReLU
    \item Conv Layer(out-channel=16, kernel-size=3, stride=1, padding=0), ReLU   
    \item MaxPool(pool-size=2)
    \item Conv Layer(out-channel=1, kernel-size=3, stride=1, padding=0), ReLU   
\end{itemize}

Architecture of the Decoder used for computing the IM1, IM2 Metrics:
\begin{itemize}
    \item Conv Layer(out-channel=16, kernel-size=3, stride=1, padding=0), ReLU
    \item UpSample((2,2))
    \item Conv Layer(out-channel=16, kernel-size=3, stride=1, padding=0), ReLU   
    \item Conv Layer(out-channel=1, kernel-size=3, stride=1, padding=0)
\end{itemize}

\subsubsection{BaseGenCF Architecture}
The BaseVAE Encoder Decoder framework as trained for 25 epochs, batch size 16, learning rate $10^{-4}$ with SGD optimizer and \texttt{BaseGenCFLoss} (Section 3.1), with a 80-10-10\% training, validation and test dataset.

\textbf{Encoder Architecture}:

Architecture of Encoder consists of two networks: one is used to estimate the mean and the other is used to estimate the variance of the posterior distribution $q(z|x,y_{k})$. Both the networks for estimating mean and variance have the same architecture as described below, with the only difference that the variance network having an additional Sigmoid activation at the end to ensure the variance is positive. 

The network consists of sub models: the output from the sub model M1 is concatenated with the target class of the counterfactual and then fed into the sub model M2. We denote the size of the input after sub model M1 as \textit{convoluted-size} and the latent embedding space dimension as \textit{embedding-size}. We used \textit{embedding-size} as 10 and the \textit{convoluted-size} can be computed using the M1 architecture below to be $22*22$

Sub Model M1:
\begin{itemize}
    \item Conv Layer(out-channel=16, kernel-size=3, stride=1, padding=0), ReLU
    \item Conv Layer(out-channel=16, kernel-size=3, stride=1, padding=0), ReLU
    \item Conv Layer(out-channel=1, kernel-size=3, stride=1, padding=0), ReLU
\end{itemize}

Sub Model M2:
\begin{itemize}
    \item Hidden Layer( convoluted-size+1, embedding-size), BatchNorm
    \item Sigmoid (In case of variance network only)
\end{itemize}

\textbf{Decoder Architecture}:
\begin{itemize}
    \item Hidden Layer(embedding-size+1, convoluted-size), BatchNorm, ReLU
    \item Hidden Layer(convoluted-size, 2*convoluted-size), BatchNorm, ReLU
    \item Hidden Layer(2*convoluted-size, 28*28), BatchNorm, Sigmoid

\end{itemize}

\end{document}